\begin{document}

%%
%% The "title" command has an optional parameter,
%% allowing the author to define a "short title" to be used in page headers.
\title{Multi-modal Adaptive Estimation for Temporal Respiratory Disease Outbreak}

%%
%% The "author" command and its associated commands are used to define
%% the authors and their affiliations.
%% Of note is the shared affiliation of the first two authors, and the
%% "authornote" and "authornotemark" commands
%% used to denote shared contribution to the research.
\author{Hong Liu}
\authornote{Both authors contributed equally to this research.}
\email{2240009518@student.must.edu.mo}
\orcid{0009-0008-3411-5304}

\author{Kerui Cen}
\authornotemark[1]
\email{corycen1@outlook.com}
\affiliation{%
  \institution{Respiratory Disease AI Laboratory in Epidemic Intelligence and Applications of Medical Big Data Instruments, Macau University of Science and Technology}
  \city{Taipa}
  \country{Macau SAR, China}}
\affiliation{%
  \institution{Faculty of Innovation Engineering, Macau University of Science and Technology}
  \city{Taipa}
  \country{Macau SAR, China}}
\affiliation{%
  \institution{Institute of Systems Engineering, Macau University of Science and Technology}
  \city{Taipa}
  \country{Macau SAR, China}}

\author{Yanxing Chen}
\email{dchen8968@gmail.com}
\affiliation{%
  \institution{Respiratory Disease AI Laboratory in Epidemic Intelligence and Applications of Medical Big Data Instruments, Macau University of Science and Technology}
  \city{Taipa}
  \country{Macau SAR, China}}
\affiliation{%
  \institution{School of Business, Macau University of Science and Technology}
  \city{Taipa}
  \country{Macau SAR, China}}

\author{Zige Liu}
\email{zgliu@must.edu.mo}
\affiliation{%
  \institution{Respiratory Disease AI Laboratory in Epidemic Intelligence and Applications of Medical Big Data Instruments, Macau University of Science and Technology}
  \city{Taipa}
  \country{Macau SAR, China}}
\affiliation{%
  \institution{Faculty of Innovation Engineering, Macau University of Science and Technology}
  \city{Taipa}
  \country{Macau SAR, China}}
\affiliation{%
  \institution{Institute of Systems Engineering, Macau University of Science and Technology}
  \city{Taipa}
  \country{Macau SAR, China}}

\author{Dong Chen}
\email{dchen@must.edu.mo}
\affiliation{%
  \institution{Respiratory Disease AI Laboratory in Epidemic Intelligence and Applications of Medical Big Data Instruments, Macau University of Science and Technology}
  \city{Taipa}
  \country{Macau SAR, China}}

\author{Zifeng Yang}
\authornote{Corresponding author.}
\email{jeffyah@163.com}
\affiliation{%
  \institution{State Key Laboratory of Respiratory Disease, National Clinical Research Center for Respiratory Disease, Guangzhou Institute of Respiratory Health, The First Affiliated Hospital of Guangzhou Medical University}
  \city{Guangzhou}
  \country{China}}
\affiliation{%
  \institution{Guangzhou National Laboratory}
  \city{Guangzhou}
  \country{China}}

\author{Chitin Hon}
\authornotemark[2]
\email{cthon@must.edu.mo}
\affiliation{%
  \institution{Respiratory Disease AI Laboratory in Epidemic Intelligence and Applications of Medical Big Data Instruments, Macau University of Science and Technology}
  \city{Taipa}
  \country{Macau SAR, China}}
\affiliation{%
  \institution{Faculty of Innovation Engineering, Macau University of Science and Technology}
  \city{Taipa}
  \country{Macau SAR, China}}
\affiliation{%
  \institution{Institute of Systems Engineering, Macau University of Science and Technology}
  \city{Taipa}
  \country{Macau SAR, China}}
%
% By default, the full list of authors will be used in the page
% headers. Often, this list is too long, and will overlap
% other information printed in the page headers. This command allows
% the author to define a more concise list
% of authors' names for this purpose.
\renewcommand{\shortauthors}{Liu, Cen et al}

%%
%% The abstract is a short summary of the work to be presented in the
%% article.
\begin{abstract}
Timely and robust influenza incidence forecasting is critical for public health decision-making. This paper presents MAESTRO (Multi-modal Adaptive Estimation for Temporal Respiratory Disease Outbreak), a novel, unified framework that synergistically integrates advanced spectro-temporal modeling with multi-modal data fusion, including surveillance, web search trends, and meteorological data. By adaptively weighting heterogeneous data sources and decomposing complex time series patterns, the model achieves robust and accurate forecasts. Evaluated on over 11 years of Hong Kong influenza data (excluding the COVID-19 period), MAESTRO demonstrates state-of-the-art performance, achieving a superior model fit with an R² of 0.956. Extensive ablations confirm the significant contributions of its multi-modal and spectro-temporal components. The modular and reproducible pipeline is made publicly available to facilitate deployment and extension to other regions and pathogens, presenting a powerful tool for epidemiological forecasting.
\end{abstract}

%%
%% The code below is generated by the tool at http://dl.acm.org/ccs.cfm.
%% Please copy and paste the code instead of the example below.
%%
\begin{CCSXML}
<ccs2012>
<concept>
<concept_id>10010405.10010444.10010449</concept_id>
<concept_desc>Applied computing~Health informatics</concept_desc>
<concept_significance>500</concept_significance>
</concept>
<concept>
<concept_id>10010147.10010257.10010258.10010259.10010264</concept_id>
<concept_desc>Computing methodologies~Supervised learning by regression</concept_desc>
<concept_significance>500</concept_significance>
</concept>
<concept>
<concept_id>10010147.10010257.10010293.10010294</concept_id>
<concept_desc>Computing methodologies~Neural networks</concept_desc>
<concept_significance>300</concept_significance>
</concept>
<concept>
<concept_id>10010405.10010481.10010487</concept_id>
<concept_desc>Applied computing~Forecasting</concept_desc>
<concept_significance>300</concept_significance>
</concept>
<concept>
<concept_id>10002950.10003648.10003688.10003693</concept_id>
<concept_desc>Mathematics of computing~Time series analysis</concept_desc>
<concept_significance>500</concept_significance>
</concept>
</ccs2012>
\end{CCSXML}

\ccsdesc[500]{Applied computing~Health informatics}
\ccsdesc[500]{Computing methodologies~Supervised learning by regression}
\ccsdesc[300]{Computing methodologies~Neural networks}
\ccsdesc[300]{Applied computing~Forecasting}
\ccsdesc[500]{Mathematics of computing~Time series analysis}

%%
%% Keywords. The author(s) should pick words that accurately describe
%% the work being presented. Separate the keywords with commas.
\keywords{time series forecasting, multi-modal learning, influenza prediction, state-space models, Transformer, frequency-domain learning, uncertainty estimation}

\received{15 September 2025}
\received[revised]{20 September 2025}
\received[accepted]{30 September 2025}

%%
%% This command processes the author and affiliation and title
%% information and builds the first part of the formatted document.
\maketitle
\section{Introduction}
Seasonal influenza exerts a substantial burden on healthcare systems and society. Accurate short-horizon forecasting of influenza incidence is crucial for proactive resource allocation, timely public health alerts, and effective communication strategies. However, achieving such forecasts is challenging. Epidemiological data are often a complex blend of signals: traditional surveillance data can be noisy and lagged, while exogenous drivers like web search trends and meteorological conditions introduce distinct temporal dynamics and periodicities. This inherent heterogeneity demands a new class of models capable of not just processing multi-modal data, but also discerning and adapting to the most salient patterns across both time and frequency domains.

To address these challenges, this paper proposes MAESTRO (Multi-modal Adaptive Estimation for Temporal Respiratory Disease Outbreak). Rather than relying on a single monolithic architecture, MAESTRO is founded on a principled design that synergistically integrates specialized modules to tackle distinct aspects of the forecasting problem. It systematically decomposes time series to separate long-term trends from seasonal fluctuations and then processes these components through parallel pathways that include advanced state-space models for capturing long-range dependencies and frequency-domain analysis for identifying robust periodic patterns. This spectro-temporal approach allows the model to build a rich, multi-faceted understanding of the underlying dynamics. The insights from these diverse analytical perspectives are then dynamically synthesized through an adaptive ensemble mechanism, which intelligently weights each component's contribution based on the evolving data context.

To validate the approach, MAESTRO was rigorously evaluated on a comprehensive real-world dataset from Hong Kong, spanning over a decade of surveillance, Google Trends, and meteorological data (excluding the COVID-19 pandemic period from training to ensure data integrity). Moreover, extensive ablation studies were conducted to systematically verify the contributions of each architectural component and data modality, providing a clear understanding of the model's inner workings.

This work makes three primary contributions. First, a novel, unified spectro-temporal architecture is introduced that synergistically couples series decomposition, state-space modeling, and frequency-domain learning to specifically address the complexities of epidemiological forecasting. Second, an effective multi-modal integration strategy is proposed that cohesively merges heterogeneous data sources—epidemiological surveillance, web search trends, and weather—through cross-channel attention and an adaptive fusion framework. Third, through a comprehensive empirical study on influenza in Hong Kong, the model's state-of-the-art accuracy and robustness are demonstrated, and full reproducibility is ensured by making the entire pipeline publicly available, thereby facilitating its adoption and extension to other diseases and regions.

\section{Related Work}
Time Series Forecasting. Deep learning has revolutionized multivariate time series forecasting. Early approaches based on recurrent neural networks (e.g., LSTM/GRU) excelled at capturing sequential dependencies but faced challenges with long-range modeling. The advent of the Transformer architecture\cite{vaswani2017attention} and its numerous variants (e.g., Informer\cite{zhou2021informer}, Autoformer\cite{wu2021autoformer}) significantly improved performance by capturing long-distance dependencies, though often at the cost of quadratic complexity. To address efficiency and specific temporal patterns, recent research has explored new paradigms. Architectures like TimesNet\cite{wu2023timesnet} leverage spectral analysis and convolutional structures to model periodicity, while others like DLinear\cite{zeng2023dlinear} have demonstrated the surprising effectiveness of simple linear heads. Most recently, state-space models (SSMs) such as S4\cite{gu2022s4} and Mamba\cite{gu2023mamba} have emerged as a powerful alternative, offering linear-time complexity for long-context modeling. However, these models often focus on a single architectural principle, lacking a synergistic integration of spectro-temporal features.

Building upon these advances in general time series forecasting, the field of epidemiological and influenza forecasting has developed its own specialized approaches. Beyond general time series methods, epidemiological forecasting has a rich history. Classical statistical models like the Box-Jenkins methodology\cite{box2015time} and mechanistic models have been widely used. Machine learning approaches have gained prominence by incorporating external data signals that are highly correlated with disease transmission. Seminal works include using web search data (e.g., Google Flu Trends\cite{ginsberg2009detecting}), environmental factors like absolute humidity\cite{shaman2010absolute}, and human mobility data\cite{senanayake2016predicting}. Deep learning models, such as DeepFlu\cite{wu2018deepflu}, have further advanced the field by learning complex, non-linear relationships from these multi-modal inputs.

The success of epidemiological forecasting heavily depends on effectively integrating heterogeneous data sources. Attention-based fusion and adaptive weighting mechanisms have proven effective for this task. While these methods are powerful, few studies have attempted to create a unified framework that simultaneously leverages advanced temporal modeling (like SSMs), frequency-domain analysis, and adaptive multi-modal fusion specifically for the challenges of epidemiological forecasting. 

MAESTRO addresses this gap. It proposes a novel, modular architecture that synergistically combines time series decomposition, state-space modeling (Mamba), and frequency-domain learning. Crucially, it employs a cross-channel attention mechanism to robustly fuse multi-modal signals, positioning it as a comprehensive and powerful tool for modern epidemiological forecasting.

The principles of Multi-modal fusion and attention mechanisms, central to MAESTRO, have demonstrated significant value across a diverse range of applications beyond epidemiological forecasting. For instance, in industrial settings, methods combining LiDAR and camera data through multimodal fusion have been developed for precise defect detection \cite{LiTIM2025AFD}. Similarly, attention mechanisms have been effectively utilized in complex detection tasks, such as identifying bird collision risks \cite{LiERA2022Bird} and enhancing object detection models for industrial quality control \cite{LiJSEN2024YOLO}. Furthermore, the fusion of advanced signal processing techniques like Adaptive Fourier Decomposition (AFD) with deep learning has shown promise in enhancing the interpretation of biomedical signals, such as spirometry data \cite{LiTIM2025Spiro}. These examples underscore the broad applicability and robustness of the core techniques that MAESTRO integrates and adapts for the specific challenges of forecasting respiratory diseases.

\section{Method}
\subsection{Problem Formulation}
Given a multivariate time series $\mathbf{X}\in\mathbb{R}^{T\times D}$ (surveillance, trends, weather) and target $y\in\mathbb{R}$ (influenza positive rate), the goal is to predict $y_{t+1: t+H}$ from $\mathbf{X}_{t-L+1:t}$ with horizon $H$ and window $L$.

\subsection{Notation and Assumptions}
The elementwise (Hadamard) product is denoted by $\odot$ and convolution by $\ast$; $\mathcal{F}(\cdot)$ and $\mathcal{F}^{-1}(\cdot)$ are the 1D FFT and inverse FFT along time. Modalities are indexed by $m\in\{1,\dots,M\}$, channels by $i\in\{1,\dots,C\}$, and time by $t\in\{1,\dots,T\}$. Hidden width is $d$. The moving-average operator $\mathcal{D}_K$ is a length-$K$ low-pass filter (trend), and $I-\mathcal{D}_K$ yields the high-pass residual (seasonal). $\operatorname{softmax}$ is used for normalized exponentiation over the specified axis, $\operatorname{softplus}(x)=\log(1+e^x)$ for nonnegativity, $\operatorname{pool}$ for temporal pooling (e.g., mean/max), and $\operatorname{MLP}$ for a small feed-forward network. Unless stated, $\|\cdot\|$ denotes the Euclidean/operator norm and $\rho(\cdot)$ the spectral radius.

\subsection{Architecture Overview}
The MAESTRO model processes input data through a multi-stage pipeline designed to capture temporal and spectral features robustly. The internal data flow is as follows:
\begin{enumerate}
    \item \textbf{Series Decomposition:} If the use\_decomp flag is enabled, the input data is first decomposed into trend and seasonal components. This is achieved using an additive decomposition method based on a moving average filter, as implemented in the SeriesDecomposition module. Specifically, the trend component is extracted by applying a 1D average pooling layer with a specified kernel size. To handle boundary effects, the input sequence is padded by repeating the first and last values. The seasonal component is then obtained by subtracting the calculated trend from the original series.
    \item \textbf{Spectro-Temporal Block:} The decomposed components (or the raw input data if decomposition is disabled) are then processed through a unified spectro-temporal pipeline:
    \begin{itemize}
        \item Normalization: The time series data is normalized by subtracting its mean and dividing by its standard deviation. These statistics are saved for denormalizing the final output.
        \item Embedding: The normalized components are fed into a DataEmbedding layer. This layer uses a 1D convolutional network (TokenEmbedding) to project the input features to the model's hidden dimension (d\_model) and adds sinusoidal positional encodings (PositionalEmbedding) to retain temporal information.
        \item Component-wise Encoding: If decomposition is used, the seasonal and trend embeddings are processed by two separate Encoder stacks (seasonal\_encoder and trend\_encoder). Otherwise, a single Encoder processes the unified data. Each encoder consists of multiple EncoderLayer modules, which apply multi-head self-attention followed by a position-wise feed-forward network.
        \item MambaBlock: A Mamba state-space model captures long-range dependencies efficiently.
        \item MultiScaleTemporalConv: Parallel 1D convolutions with different kernel sizes capture local patterns at various temporal scales, and their outputs are fused using an adaptive weighting mechanism.
        \item FrequencyDomainModule: The data is transformed into the frequency domain using FFT. A learnable filter is applied, and the result is transformed back to the time domain via an inverse FFT to capture periodic patterns.
        \item BiLSTMModule: A bidirectional LSTM processes the sequence to capture temporal dependencies in both forward and backward directions.
        \item Adaptive Weighting Fusion: If decomposition is used, the enhanced seasonal and trend representations are combined using an AdaptiveWeighting module, which learns to dynamically weigh the importance of each component.
    \end{itemize}
    \item \textbf{Cross-Channel Attention:} If the use\_cross\_attn flag is enabled, the representation is processed by a CrossChannelAttention module. This module applies attention across different input variables (channels) to explicitly model their inter-dependencies.
    \item \textbf{Temporal Projection:} The resulting representation is projected from the input sequence length to the desired prediction length using a linear layer in the TemporalProjection module.
    \item \textbf{Uncertainty Estimation:} The final prediction is generated through one of two paths:
    \begin{itemize}
        \item If estimate\_uncertainty is enabled, an UncertaintyEstimator module produces both a mean prediction and a standard deviation for uncertainty quantification.
        \item Otherwise, a single linear layer (output\_proj) maps the features to the final output.
    \end{itemize}
      \item \textbf{Denormalization:} The output is denormalized using the mean and standard deviation computed in the initial normalization step to produce the final forecast.
\end{enumerate}
This modular and hierarchical architecture allows MAESTRO to effectively integrate diverse data modalities and capture a wide range of spectro-temporal dynamics. In summary, a moving-average block decomposes inputs into seasonal and trend components. These streams are then embedded and encoded with Transformer encoders. Residual Mamba state-space, multi-scale temporal convolutions, and a frequency-domain module (learnable filters in FFT space) are added. Cross-channel attention captures inter-variable dependencies. Finally, a linear temporal projection maps encoder outputs to the prediction horizon $H$, and an optional uncertainty head outputs standard deviations via the Softplus activation function.

\begin{figure*}[htbp]
  \centering
  \includegraphics[width=0.9\textwidth]{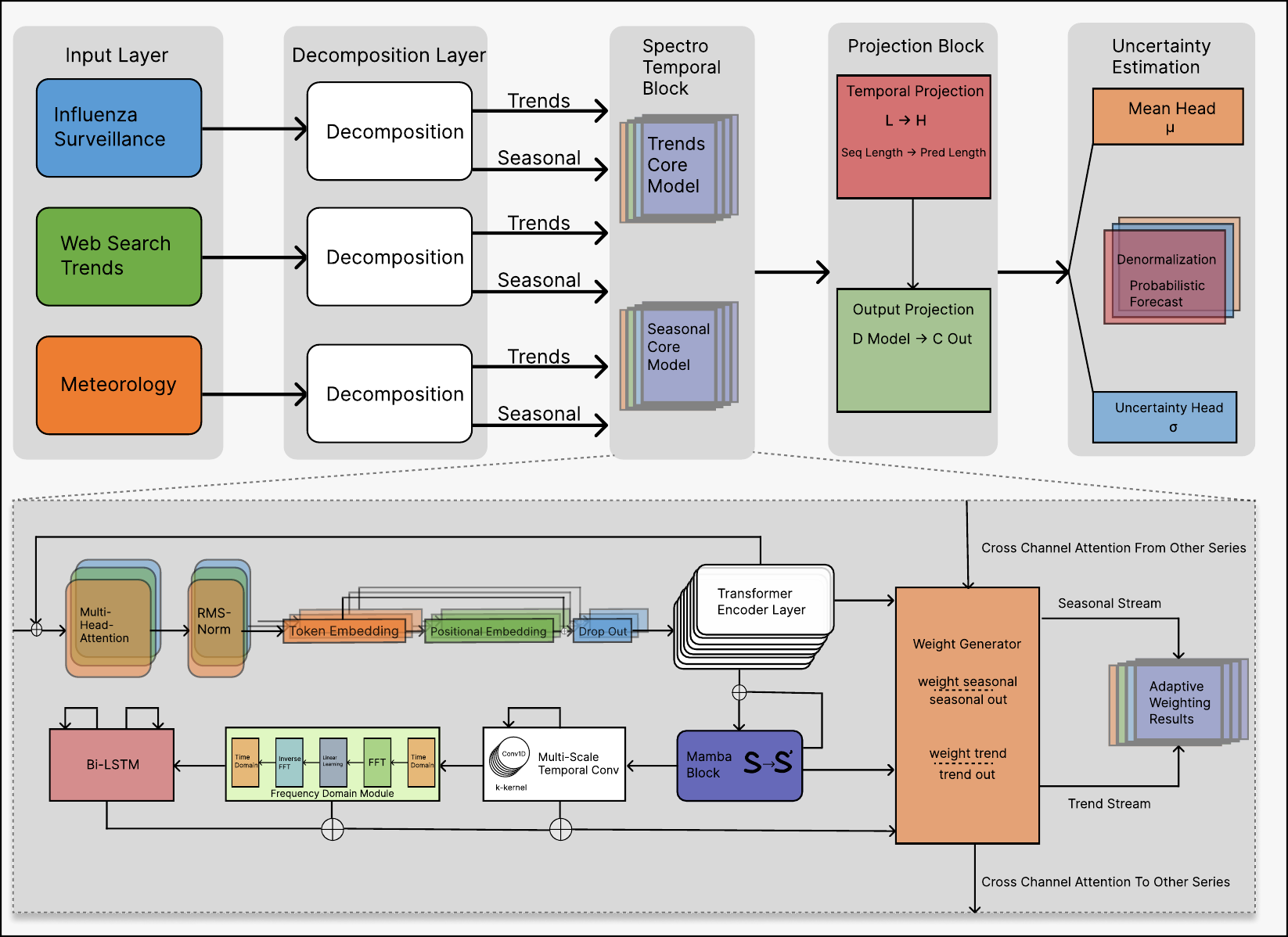}
  \caption{The MAESTRO architecture, illustrating the detailed data flow. 1) The input series is optionally decomposed into trend and seasonal components, and then each component is normalized. 2) Each component is embedded using token and positional embeddings. 3) Parallel encoders process the seasonal and trend data, followed by a series of residual spectro-temporal enhancement blocks (Mamba, Multi-Scale Temporal Conv, Frequency-Domain Module, and BiLSTM). 4) The enhanced representations are adaptively fused. 5) A temporal projection layer maps the sequence to the desired output horizon. 6) The final output is generated and denormalized to produce the forecast. The diagram shows the flow for a single modality; this process is repeated for all input modalities before a final cross-channel fusion.}
  \label{fig:architecture}
\end{figure*}

\subsection{Mamba State-Space Model}
To efficiently capture long-range dependencies in the sequence, a Mamba state-space model\cite{gu2023mamba} is employed. An SSM is fundamentally a continuous-time linear system that models how a latent state $h(t) \in \mathbb{R}^N$ evolves based on an input signal $u(t)$, and how this state produces an output $y(t)$\cite{gu2022s4}. The system is defined by:
\begin{equation}
\frac{\mathrm{d}h(t)}{\mathrm{d}t} = A h(t) + B u(t),\quad y(t) = C h(t) + D u(t),
\end{equation}
where $A \in \mathbb{R}^{N \times N}$ is the state matrix governing the internal dynamics, while $B \in \mathbb{R}^{N \times d}$, $C \in \mathbb{R}^{d \times N}$, and $D \in \mathbb{R}^{d \times d}$ are matrices that project the input, state, and input to the output, respectively.

For use in a neural network, this continuous system is discretized. A key innovation of Mamba is making this discretization process dependent on the input data itself, allowing the model to selectively focus on or ignore information\cite{gu2023mamba}. This is achieved by introducing an input-dependent step size $\Delta_t$. Using a zero-order hold, which assumes the input is held constant across the sampling interval, the continuous-time state-space matrices ($A, B$) are transformed into their discrete-time counterparts ($A_t^{\mathrm{d}}, B_t^{\mathrm{d}}$) for each time step $t$:
\begin{equation}
A_t^{\mathrm{d}} = \exp(A\Delta_t),\qquad B_t^{\mathrm{d}} = \left(\int_0^{\Delta_t} \exp(A\tau)\,\mathrm{d}\tau\right)B.
\end{equation}
Here, the matrix exponential $\exp(A\Delta_t)$ describes the evolution of the system's state over the discrete time step $\Delta_t$. For the input matrix discretization, $\tau$ serves as an integration variable for time within the interval $[0, \Delta_t]$. The integral term $\int_0^{\Delta_t} \exp(A\tau)\,\mathrm{d}\tau$ effectively accumulates the state transition's impact on the input over that interval. The crucial feature is that $\Delta_t$ is not fixed but is dynamically computed from the current input $x_t$, making the model's temporal resolution adaptive.

The full recurrence relation for an input sequence $x_t \in \mathbb{R}^d$ is defined as follows. First, the input $x_t$ is used to derive three key components: an adaptive step size $\Delta_t$, a projected input $v_t$, and a gating signal $s_t$.
\begin{equation}
\Delta_t = \operatorname{softplus}(w_\Delta^\top x_t + b_\Delta), \quad v_t = W_v x_t, \quad s_t = \sigma(W_s x_t + b_s).
\end{equation}
The step size $\Delta_t$ is guaranteed to be positive by the $\operatorname{softplus}$ activation. The vectors $w_\Delta, b_\Delta$ and matrices $W_v, W_s$ are learnable parameters. These components are then used to update the model's hidden state $h_t$ and a secondary gated state $z_t$:
\begin{equation}
 h_t = A_t^{\mathrm{d}} h_{t-1} + B_t^{\mathrm{d}} v_t,\quad z_t = s_t \odot (C h_t) + (1-s_t) \odot z_{t-1}.
\end{equation}
The primary hidden state $h_t$ evolves according to the discretized state-space dynamics. The secondary state $z_t$ is updated through a gating mechanism, where the gate $s_t$ (a value between 0 and 1 from the sigmoid function $\sigma$) determines the blend between the current SSM output, $C h_t$ (where $C$ is a learnable output matrix), and the previous state $z_{t-1}$. This structure, using element-wise multiplication $\odot$, allows the model to selectively retain or overwrite information over time.

To ensure stability, the continuous state matrix $A$ is parameterized as $A=-\operatorname{softplus}(\widehat{A})$, where $\widehat{A}$ is a learnable matrix (typically diagonal or low-rank-plus-diagonal). This forces the eigenvalues of $A$ to have negative real parts, which in turn guarantees that the spectral radius $\rho$ of the discretized state matrix, $\rho\big(\exp(A\Delta_t)\big)$, remains less than 1. This condition is necessary to prevent the hidden state $h_t$ from growing without bound. Despite its recurrent formulation, the model can be implemented with a highly parallel scan operation, achieving a computational complexity of $\mathcal{O}(T d)$\cite{gu2023mamba}.

\subsection{Multi-Scale Temporal Convolutions}
To capture local temporal patterns and contextual information at various resolutions, a multi-scale temporal convolution module is incorporated. This module operates on the hidden representations $H^m \in \mathbb{R}^{T \times d}$ from the preceding block. It employs a set of one-dimensional convolutional layers (Conv1D) with varying kernel sizes $k \in \mathcal{K}$ and dilation rates $\delta \in \mathcal{D}$. The kernel size $k$ determines the width of the local temporal window, while the dilation rate $\delta$ allows the receptive field to expand exponentially, enabling the capture of dependencies over longer ranges without a corresponding increase in computational cost.

For efficiency, depthwise separable convolutions are utilized, which factorize a standard convolution into a depthwise and a pointwise convolution\cite{wu2023timesnet}. For each combination of kernel size $k$ and dilation $\delta$, a convolutional layer processes the input sequence:
\begin{equation}
U_{k,\delta}^m = \mathrm{Conv1D}_{k,\delta}(H^m) \in \mathbb{R}^{T \times d}.
\end{equation}
Here, $U_{k,\delta}^m$ represents the feature map containing temporal patterns extracted at a specific scale defined by the pair $(k, \delta)$.

To dynamically integrate information from these different scales, the resulting feature maps are aggregated using an attention-style weighting mechanism\cite{vaswani2017attention}. This allows the model to adaptively emphasize the most relevant temporal patterns for the given input. The aggregated representation $U^m$ is computed as a weighted sum:
\begin{equation}
U^m = \sum_{k \in \mathcal{K}, \delta \in \mathcal{D}} \alpha_{k,\delta}^m \, U_{k,\delta}^m.
\end{equation}
The attention weights $\alpha_{k,\delta}^m$ are dynamically generated based on the input representation $H^m$. Specifically, a small trainable query function $q$ projects $H^m$ into a score for each scale, and a softmax function normalizes these scores into a valid probability distribution:
\begin{equation}
\alpha_{k,\delta}^m = \operatorname{softmax}_{k,\delta}\big(q(H^m)\big).
\end{equation}
This adaptive aggregation of features from multiple temporal scales allows the model to capture complex local patterns, which complements the global periodic information extracted by the frequency-domain module.

\subsection{Frequency-Domain Module}
To capture global periodicities and trends, a frequency-domain module is employed that operates on the hidden sequence $H^m \in \mathbb{R}^{T \times d}$ from the previous block. This approach is inspired by classical signal processing and recent advances in time series forecasting that leverage frequency analysis\cite{zhou2022fedformer, wu2021autoformer}.

First, the sequence is transformed into the frequency domain using a one-dimensional Fast Fourier Transform (FFT) along the time axis:
\begin{equation}
Z^m = \mathcal{F}(H^m) \in \mathbb{C}^{T \times d},
\end{equation}
where $\mathcal{F}$ denotes the FFT operator, and $Z^m$ is the complex-valued spectral representation of the input sequence.

Next, spectral filtering is performed by applying a learnable complex-valued spectral mask, $W_f^m \in \mathbb{C}^{T \times d}$, to the representation $Z^m$. This mask is parameterized either by its real and imaginary parts or by its magnitude and phase. The filtering operation is an element-wise (Hadamard) product:
\begin{equation}
\widetilde{Z}^m = W_f^m \odot Z^m.
\end{equation}
Here, $\widetilde{Z}^m$ denotes the filtered spectral representation, where the tilde ($\sim$) explicitly indicates that $Z^m$ has been transformed or modified by the spectral filtering operation. This notation distinguishes the processed spectral data from its original form, $Z^m$, which is the direct FFT output of $H^m$. This step allows the model to selectively amplify or attenuate specific frequencies, effectively learning a frequency-domain filter to extract periodic patterns and suppress noise. The filtered spectral representation $\widetilde{Z}^m$ is then transformed back to the time domain using the inverse FFT ($\mathcal{F}^{-1}$):
\begin{equation}
\widetilde{H}^m = \mathcal{F}^{-1}(\widetilde{Z}^m) \in \mathbb{R}^{T \times d}.
\end{equation}

To enhance robustness and capture patterns across different time scales, a multi-window analysis can be optionally employed. Let $\mathcal{S}$ be a set of different analysis configurations (e.g., using different FFT window functions or segment lengths). For each configuration $s \in \mathcal{S}$, a filtered time-domain representation $\widetilde{H}_s^m$ is computed. These representations are then aggregated using an adaptive weighting scheme:
\begin{equation}
\widetilde{H}^m = \sum_{s \in \mathcal{S}} \pi_s^m \, \widetilde{H}_s^m.
\end{equation}
The weights $\pi_s^m$ are generated dynamically, allowing the model to emphasize the most informative spectral view for a given input. They are computed via a softmax over scores produced by a small multi-layer perceptron (MLP), $g_s$, which takes a summarized representation of the input as its argument:
\begin{equation}
\pi_s^m = \operatorname{softmax}_s\big( g_s(\operatorname{pool}(H^m)) \big),
\end{equation}
where $\operatorname{pool}$ is a temporal pooling operator (e.g., average pooling) that creates a fixed-size summary statistic of the sequence $H^m$.

Finally, to ensure the stability and interpretability of the learned filter, the spectral mask $W_f^m$ is regularized. The spectral gain is constrained by enforcing $\sup_\omega \lVert W_f^m(\omega)\rVert \le G_f$, where $G_f$ is a predefined upper bound, implemented via parameterization and clipping. Additionally, penalties may be applied to encourage smoothness (e.g., penalizing the first-difference of the filter magnitudes) and sparsity (e.g., an L1 penalty on the magnitudes) to promote selective and stable spectral shaping.

\subsection{Cross-Channel Attention}
To explicitly model the interactions and dependencies between the $C$ different variables (channels) of the multivariate time series, a cross-channel attention module is incorporated. This allows the model to dynamically route information between related variables. The input to this module is a tensor $H \in \mathbb{R}^{T \times C \times d}$, representing the features for $C$ channels over $T$ time steps.

Following the standard self-attention mechanism\cite{vaswani2017attention}, query, key, and value representations are first generated. However, to capture the relationships between channels rather than time steps, attention scores are computed based on a summary of each channel's representation across time. Specifically, a temporal pooling operator, $\operatorname{pool}_t$, is first applied to the time series of each channel $H_{:,i,:} \in \mathbb{R}^{T \times d}$ to obtain a channel-specific summary vector $h_i^{\text{pool}} \in \mathbb{R}^d$. These summary vectors are then linearly projected to form the query ($q_i$) and key ($k_j$) vectors:
\begin{equation}
q_i = \operatorname{pool}_t(H_{:,i,:}) W_Q, \quad k_j = \operatorname{pool}_t(H_{:,j,:}) W_K,
\end{equation}
where $W_Q \in \mathbb{R}^{d \times d_k}$ and $W_K \in \mathbb{R}^{d \times d_k}$ are learnable projection matrices. The attention weight $\alpha_{i,j}$, which quantifies the influence of channel $j$ on channel $i$, is then computed using the scaled dot-product:
\begin{equation}
\alpha_{i,j} = \operatorname{softmax}_j\left( \frac{q_i k_j^\top}{\sqrt{d_k}} \right).
\end{equation}
This results in an attention matrix $\alpha \in \mathbb{R}^{C \times C}$ that is shared across all time steps.

Finally, the output representation $\widetilde{H}_{t,i,:}$ for channel $i$ at time step $t$ is computed by mixing the information from all channels, weighted by the attention scores. For each channel $j$ and time step $t$, a value vector $v_{t,j}$ is created using a projection matrix $W_V \in \mathbb{R}^{d \times d_v}$. The final output is the weighted sum of these value vectors:
\begin{equation}
\widetilde{H}_{t,i,:} = \sum_{j=1}^C \alpha_{i,j} (H_{t,j,:} W_V).
\end{equation}
This mechanism captures inter-variable couplings with a computational complexity of approximately $\mathcal{O}(C^2 d + T C d)$. For long sequences where the number of time steps $T$ is much larger than the number of channels $C$, this is significantly more scalable than standard temporal self-attention, which has a complexity of $\mathcal{O}(T^2 d)$.

\subsection{Adaptive Multi-modal Ensemble}
The model processes the input signal through multiple, parallel feature extraction modules, including the Mamba-based model, multi-scale temporal convolutions, and the frequency-domain module. To synthesize insights from these diverse perspectives, an adaptive ensemble is employed that dynamically weights the contribution of each modality at every time step. This is crucial, as the predictive power of any single modality can fluctuate depending on the local characteristics of the time series.

First, each modality $m$ produces a feature representation, which is denoted generically as $O^m \in \mathbb{R}^{T \times d}$. This representation is then passed to a dedicated prediction head, $f_m$, to generate a modality-specific forecast:
\begin{equation}
    \hat{y}_t^m = f_m(O_{t,:}^m).
\end{equation}
Here, the colon (:) in the subscript of $O_{t,:}^m$ is a standard tensor notation indicating that all elements along the corresponding dimension are selected. Specifically, $O_{t,:}^m$ extracts the complete feature vector of size $d$ for modality $m$ at time step $t$.

The core of the ensemble is the mechanism for generating the time-varying weights $w_t^m$, which functions as a form of attention mechanism. A context vector $c_t \in \mathbb{R}^{d_c}$ is first constructed that summarizes the state of the system at time $t$ by aggregating information from all modalities. This is done by pooling the feature representations (e.g., via global average pooling), concatenating them, and processing the result through an MLP:
\begin{equation}
    c_t = \operatorname{MLP}\left(\operatorname{concat}_m\left(\operatorname{pool}_t(O^m)\right)\right).
\end{equation}
This context vector $c_t$ acts as a dynamic query. A gating function $g_m$ (e.g., a linear layer) computes a relevance score $s_t^m = g_m(c_t)$ for each modality. The scores are normalized using a softmax function to produce the final weights, $w_t^m = \operatorname{softmax}_m(s_t^m)$, which are positive and sum to one.

The final ensemble prediction $\hat{y}_t$ is the weighted average of the individual modality predictions, conceptually similar to a Mixture of Experts (MoE) model:
\begin{equation}
    \hat{y}_t = \sum_m w_t^m \hat{y}_t^m.
\end{equation}

Finally, to improve robustness under non-stationarity, regularization is applied to the weights $w_t^m$. Entropy regularization helps to mitigate overconfident collapse (where the model relies on a single modality), while temporal smoothness regularization penalizes erratic oscillations in the weights over time. This improves the stability and interpretability of the ensemble.

\subsection{Uncertainty Estimation}
For point forecasts, the model is trained by minimizing a standard regression loss function between the predicted mean $\hat{y}_t$ and the true value $y_t$. Either the Mean Squared Error (MSE) loss or the Huber loss is used.

The MSE loss is defined as:
\begin{equation}
    \mathcal{L}_{\text{MSE}} = \frac{1}{T} \sum_{t=1}^{T} (y_t - \hat{y}_t)^2.
\end{equation}

The Huber loss provides a balance between MSE and Mean Absolute Error (MAE), making it more robust to outliers. For an error term $a_t = y_t - \hat{y}_t$, it is defined as:
\begin{equation}
    \mathcal{L}_{\text{Huber}} = \frac{1}{T} \sum_{t=1}^{T} L_\delta(a_t),
\end{equation}
where $L_\delta(a)$ is a piecewise function:
\begin{equation}
    L_\delta(a) = 
    \begin{cases} 
    \frac{1}{2}a^2 & \text{for } |a| \le \delta, \\
    \delta(|a| - \frac{1}{2}\delta) & \text{otherwise}.
    \end{cases}
\end{equation}
The hyperparameter $\delta$ controls the transition point between the quadratic and linear parts of the loss.

To incorporate uncertainty estimation, the final prediction layer is modified. Instead of a single output, the model employs a dedicated uncertainty head that, given the final feature representation $h_t \in \mathbb{R}^{d_{\text{model}}}$ for time step $t$, produces both a mean prediction $\hat{y}_t$ and a standard deviation $\sigma_t$. This is achieved through two separate linear projections:
\begin{align}
    \hat{y}_t &= W_{\mu}h_t + b_{\mu}, \quad \sigma_t &= \text{Softplus}(W_{\sigma}h_t + b_{\sigma}),
\end{align}
where $W_{\mu}, W_{\sigma}$ and $b_{\mu}, b_{\sigma}$ are learnable parameters of their respective linear layers. The Softplus activation function, defined as $\text{Softplus}(x) = \log(1 + \exp(x))$, is applied to the standard deviation output to ensure its positivity.

When uncertainty estimation is enabled, the training objective is adjusted. The total loss combines the point-prediction loss (either MSE or Huber) with a regularization term on the predicted standard deviations to encourage confidence and ensure stable training:
\begin{equation}
    \mathcal{L}_{\text{total}} = \mathcal{L}_{\text{point}}(y, \hat{y}) + \lambda \frac{1}{T} \sum_{t=1}^{T} \sigma_t,
\end{equation}
where $\mathcal{L}_{\text{point}}$ is either $\mathcal{L}_{\text{MSE}}$ or $\mathcal{L}_{\text{Huber}}$, and $\lambda$ is a hyperparameter that controls the strength of the uncertainty regularization.

\section{Architectural Design Principles}
In this section, a theoretical rationale is provided for MAESTRO's architecture. The argument is that its composition of spectro-temporal modules is not an arbitrary ensemble but a principled design tailored to the challenges of epidemiological forecasting: multi-scale seasonality, noisy data, and long-range dependencies. The design philosophy is to assign specific, complementary roles to each component, guided by their theoretical properties.

\subsection{Multi-Scale Seasonality Decomposition}
Epidemiological data exhibits seasonality at multiple scales (e.g., annual cycles). A key design principle of MAESTRO is to explicitly decompose the signal to handle this structure, a technique proven effective in recent long-term forecasting models\cite{wu2021autoformer, zeng2023dlinear}.

\textbf{Methodology:} The model employs an additive decomposition strategy, inspired by classical methods like STL (Seasonal-Trend decomposition based on Loess)~\cite{cleveland1990stl}. Both approaches share the foundational goal of separating a time series $x_t$ into trend and seasonal components. However, they differ significantly in their methodology for achieving this separation.

A classical STL decomposition follows an iterative procedure to robustly separate a series $x_t$ into a trend $T_t$, a seasonal $S_t$, and a remainder component $R_t$, such that $x_t = T_t + S_t + R_t$. The core of STL lies in its use of Loess (Locally Estimated Scatterplot Smoothing), a non-parametric regression method. The process involves an inner loop for seasonal and trend smoothing and an outer loop that computes robustness weights to mitigate the effect of outliers. Formally, at each outer loop iteration $j$, the trend and seasonal components are updated via a sequence of Loess smoothing operations, denoted by the operator $\mathcal{L}$:
\begin{align}
    S_t^{(j)} &= \mathcal{L}_{\text{seasonal}}(x_t - T_t^{(j-1)}) \\
    T_t^{(j)} &= \mathcal{L}_{\text{trend}}(x_t - S_t^{(j)})
\end{align}
This iterative, weighted regression makes STL robust but computationally intensive.

In contrast, MAESTRO adopts the decomposition principle but replaces the iterative Loess smoother with a single-pass, linear moving-average filter for computational efficiency within a deep learning framework. This filter, $\mathcal{D}_K$, is implemented as a 1D average pooling layer. The decomposition is direct and non-iterative:
\begin{align}
    x^{(\mathrm{tr})}_t &= \mathcal{D}_K(x_t) = \text{AvgPool1d}(\text{Padding}(x_t), \text{kernel\_size}=K, \text{stride}=1) \\
    x^{(\mathrm{se})}_t &= x_t - x^{(\mathrm{tr})}_t
\end{align}
While this approach is less robust to outliers than STL, its computational efficiency and differentiability make it highly suitable for end-to-end training of large-scale models. It effectively captures the low-frequency trend, aligning with the primary goal of decomposition in forecasting models~\cite{wu2021autoformer, zeng2023dlinear}.

As shown in Lemma~\ref{lem:ma-frequency}, the moving average acts as a low-pass filter. This fixed decomposition is complemented by a \textit{learnable} frequency-domain module inspired by\cite{zhou2022fedformer}, which applies a spectral filter to the input $x$:
\begin{equation}
    \mathcal{T}_M(x) = \mathcal{F}^{-1}(M \odot \mathcal{F}(x))
\end{equation}
Here, $\mathcal{F}$ is the Fourier Transform, and $M(\omega)$ is a learnable complex mask that shapes the signal's spectrum, allowing the model to isolate or suppress specific frequency bands corresponding to seasonal patterns.

\textbf{Theoretical Justification:} This two-stage approach is principled because it combines a stable, fixed-basis decomposition with a flexible, data-driven one. The moving average provides a robust, low-variance estimate of the trend, while the learnable frequency filter offers high expressiveness. Proposition~\ref{prop:spectral-stability} guarantees that this learnable filtering operation is stable, as the Lipschitz continuity ensures that small input perturbations do not lead to unbounded output changes. This is critical for robustness when dealing with noisy surveillance data.

\textbf{Synergy:} This synergy between a time-domain decomposition (via moving average) and a frequency-domain filtering (via the learnable mask) allows the model to factorize the learning problem effectively. The trend and seasonal components can be modeled by specialized downstream modules, preventing any single component from being overwhelmed by the signal's complexity. This decomposition is a core principle that enhances both the accuracy and robustness of the overall model.
\begin{lemma}[Moving-Average Frequency Response \cite{box2015time}]
\label{lem:ma-frequency}
Let $\mathcal{D}_K$ be the length-$K$ moving-average operator with impulse response $h[n]=\tfrac{1}{K}\sum_{m=0}^{K-1}\delta[n-m]$. Here, the symbols are defined as:
\begin{itemize}
    \item $\mathcal{D}_K$: A moving-average operator that smooths a time series by averaging over a window of length $K$. It is a form of a low-pass filter.
    \item $K$: A positive integer representing the size of the averaging window. A larger $K$ results in more smoothing.
    \item $h[n]$: The impulse response of the operator at discrete time step $n$. It defines the output when the input is a single impulse.
    \item $\delta[n-m]$: The Kronecker delta function, which is 1 if $n=m$ and 0 otherwise. It is used here to define the rectangular window of the moving average.
\end{itemize}
Its discrete-time frequency response is:
\begin{equation}
H(\omega)=\frac{1}{K}\,e^{-\mathrm{i}\omega (K-1)/2}\,\frac{\sin(K\omega/2)}{\sin(\omega/2)},\quad \omega\in(-\pi,\pi].
\end{equation}
In this equation:
\begin{itemize}
    \item $H(\omega)$: The frequency response, a complex function describing how the operator modifies the amplitude and phase of each frequency component $\omega$ of the input signal.
    \item $\omega$: The angular frequency in radians per sample, with the domain $(-\pi, \pi]$ representing the unique range of frequencies for a discrete-time signal.
    \item $\mathrm{i}$: The imaginary unit, satisfying $\mathrm{i}^2 = -1$.
    \item $|H(\omega)|$: The magnitude of the frequency response, indicating the gain (amplification or attenuation) applied by the filter at frequency $\omega$.
\end{itemize}
In particular, $|H(\omega)|$ is maximized at $\omega=0$ and decays as $|\omega|$ increases, confirming the operator's low-pass behavior.
\end{lemma}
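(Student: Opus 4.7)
The plan is to compute $H(\omega)$ directly from the DTFT definition, recognize it as a finite geometric series, and then analyze the magnitude via the triangle inequality together with an elementary monotonicity argument. First I would unfold $H(\omega)=\sum_{n\in\mathbb{Z}} h[n]\, e^{-\mathrm{i}\omega n}$ and substitute the given impulse response so that only $K$ terms survive, reducing the problem to the geometric sum $H(\omega) = \frac{1}{K}\sum_{m=0}^{K-1} e^{-\mathrm{i}\omega m}$ with ratio $r=e^{-\mathrm{i}\omega}$.

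The next step is to apply the closed form $\sum_{m=0}^{K-1} r^m = (1-r^K)/(1-r)$, valid for $\omega\not\equiv 0\pmod{2\pi}$, giving $H(\omega) = \frac{1}{K}\cdot\frac{1-e^{-\mathrm{i}K\omega}}{1-e^{-\mathrm{i}\omega}}$. To obtain the Dirichlet-kernel form stated in the lemma, I would symmetrize using the half-angle identity $1-e^{-\mathrm{i}\theta} = -2\mathrm{i}\sin(\theta/2)\, e^{-\mathrm{i}\theta/2}$, applied with $\theta=K\omega$ in the numerator and $\theta=\omega$ in the denominator; the $-2\mathrm{i}$ factors cancel, the residual exponentials combine into the linear phase $e^{-\mathrm{i}\omega(K-1)/2}$, and the sine ratio remains. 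The boundary case $\omega=0$ is handled by direct evaluation of the defining finite sum, which gives $H(0)=1$ and matches the removable singularity of the closed-form expression.

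For the maximization claim, I would invoke the triangle inequality on the unreduced sum, $|H(\omega)|\le \frac{1}{K}\sum_{m=0}^{K-1}|e^{-\mathrm{i}\omega m}| = 1$, with equality if and only if the $K$ phasors are aligned, i.e.\ $\omega\equiv 0\pmod{2\pi}$. For the decay claim, I would use the closed form to derive the envelope bound $|H(\omega)|\le 1/\bigl(K|\sin(\omega/2)|\bigr)$ and observe that the right-hand side is strictly decreasing on $(0,\pi]$ because $|\sin(\omega/2)|$ is strictly increasing there.

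The main obstacle is that the literal statement ``$|H(\omega)|$ decays as $|\omega|$ increases'' is false in the strict pointwise sense: the Dirichlet kernel has oscillatory side lobes separated by zeros at $\omega_k=2\pi k/K$, so $|H|$ is non-monotone on $(0,\pi]$. A faithful proof must therefore reinterpret the claim as an envelope-decay statement via the bound above, which is monotone, and explicitly flag that pointwise $|H|$ only decays in this averaged sense. This weaker reformulation still conveys the qualitative low-pass behavior required by the downstream invocation in Proposition~\ref{prop:spectral-stability}, where only the contrast between DC gain and high-frequency attenuation is used.
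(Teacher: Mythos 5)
Your derivation is correct and is the standard one; the paper itself offers no proof of this lemma (it is stated with a citation to Box and Jenkins), so there is nothing to diverge from. The geometric-sum computation, the half-angle symmetrization yielding the linear phase $e^{-\mathrm{i}\omega(K-1)/2}$ and the Dirichlet-kernel sine ratio, the removable singularity at $\omega=0$, and the triangle-inequality argument for $|H(0)|=1$ being the unique maximum on $(-\pi,\pi]$ are all exactly what the cited reference would supply. (Minor slip: $1-e^{-\mathrm{i}\theta}=2\mathrm{i}\sin(\theta/2)\,e^{-\mathrm{i}\theta/2}$, not $-2\mathrm{i}$; the factor cancels in the ratio either way, so the final formula is unaffected.) Your caveat about the decay claim is a genuine and worthwhile refinement: $|H(\omega)|$ vanishes at $\omega_k=2\pi k/K$ and has nonzero side lobes between them, so it is not pointwise monotone on $(0,\pi]$, and the honest statement is the envelope bound $|H(\omega)|\le 1/\bigl(K|\sin(\omega/2)|\bigr)$, which is monotone decreasing there. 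This reinterpretation is all that the downstream use of the lemma (justifying low-pass behavior of the trend extractor) actually requires.
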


\begin{proposition}[Spectral Filter Stability]
\label{prop:spectral-stability}
Let $M: \mathbb{C} \to \mathbb{C}$ be a complex spectral mask learned by the frequency-domain module, satisfying $\sup_\omega \|M(\omega)\| \leq G_f$ for some constant $G_f > 0$ (enforced via parameterization and clipping). The frequency-domain filtering operator $\mathcal{T}_M: x \mapsto \mathcal{F}^{-1}(M \odot \mathcal{F}(x))$ is $G_f$-Lipschitz continuous, i.e.,
\begin{equation}
\|\mathcal{T}_M(x_1) - \mathcal{T}_M(x_2)\|_2 \leq G_f \|x_1 - x_2\|_2
\end{equation}
for all input sequences $x_1, x_2 \in \mathbb{R}^T$. The symbols are defined as:
\begin{itemize}
    \item $M(\omega)$: A complex-valued spectral mask. This is a function learned by the model that is applied in the frequency domain to selectively amplify or attenuate different frequency components of the signal.
    \item $\mathbb{C}$: The set of complex numbers. The mask is complex to allow for modifications to both amplitude and phase.
    \item $G_f$: A positive constant that serves as an upper bound for the norm of the spectral mask across all frequencies. This is crucial for ensuring the filtering process is stable and does not lead to exploding output values.
    \item $\sup_\omega$: The supremum, or the least upper bound, taken over all frequencies $\omega$.
    \item $\mathcal{T}_M$: The filtering operator that encapsulates the entire process: transforming a signal to the frequency domain, applying the mask, and transforming it back to the time domain.
    \item $x$: An input time-series signal in $\mathbb{R}^T$.
    \item $\mathcal{F}$ and $\mathcal{F}^{-1}$: The Discrete Fourier Transform (DFT) and its inverse (IDFT), which convert a signal between the time and frequency domains.
    \item $\odot$: The Hadamard product, which performs element-wise multiplication. Here, it applies the mask $M$ to the Fourier-transformed signal.
    \item $\|\cdot\|_2$: The Euclidean ($L_2$) norm, used to measure the length or energy of a sequence.
    \item $G_f$-Lipschitz continuity: This is a strong guarantee of stability. It means that the operator $\mathcal{T}_M$ will not amplify the "distance" (i.e., the norm of the difference) between any two input signals by more than the factor $G_f$. This prevents small perturbations in the input from causing large changes in the output.
    \item $\mathbb{R}^T$: The space of all real-valued sequences of length $T$.
\end{itemize}
\end{proposition}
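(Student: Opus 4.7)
The plan is to exploit the linearity of $\mathcal{T}_M$ together with Plancherel's theorem for the DFT, which makes the Fourier transform an isometry (up to a normalization constant). The Lipschitz bound will then reduce to a pointwise bound in the frequency domain, where the hypothesis $\sup_\omega \|M(\omega)\| \le G_f$ applies directly via an elementwise inequality on the magnitudes of $\mathcal{F}(x_1-x_2)$.

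First, I would observe that $\mathcal{F}$, the Hadamard product with a fixed mask $M$, and $\mathcal{F}^{-1}$ are each linear operators, so their composition $\mathcal{T}_M$ is linear. Setting $d := x_1 - x_2$, this gives $\mathcal{T}_M(x_1) - \mathcal{T}_M(x_2) = \mathcal{T}_M(d) = \mathcal{F}^{-1}\bigl(M \odot \mathcal{F}(d)\bigr)$, reducing the problem to showing $\|\mathcal{T}_M(d)\|_2 \le G_f \|d\|_2$. Second, I would invoke Plancherel's identity in its unitary form, $\|\mathcal{F}^{-1}(u)\|_2 = \|u\|_2 / \sqrt{T}$ and $\|\mathcal{F}(d)\|_2 = \sqrt{T}\,\|d\|_2$ under the unnormalized-forward / $1/T$-inverse convention (or both isometric under the symmetric convention). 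Third, I would establish the pointwise estimate
\begin{equation}
\bigl\|M \odot \mathcal{F}(d)\bigr\|_2^2 \;=\; \sum_{\omega} \|M(\omega)\|^2 \,|\mathcal{F}(d)(\omega)|^2 \;\le\; G_f^2 \sum_\omega |\mathcal{F}(d)(\omega)|^2 \;=\; G_f^2 \,\|\mathcal{F}(d)\|_2^2,
\end{equation}
which is immediate from the supremum bound on $M$. Fourth, chaining these three facts yields $\|\mathcal{T}_M(d)\|_2 \le G_f \|d\|_2$, which is the claim.

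The main obstacle is mostly bookkeeping rather than mathematics: one must fix a single DFT normalization and track the $\sqrt{T}$ factors consistently so that the forward and inverse transforms compose to the identity and the Plancherel constants cancel, yielding exactly $G_f$ on the right-hand side with no spurious dimension factor. A secondary subtlety is that $M(\omega)$ is written as an element of $\mathbb{C}$ yet acts on the $d$-dimensional hidden channel, so in the vector-valued setting $\|M(\omega)\|$ should be interpreted as the operator norm of the per-frequency linear map (reducing to $|M(\omega)|$ in the scalar-mask case); once this identification is made, the same pointwise inequality applies channelwise and the argument goes through verbatim. Finally, I would note that the parameterization-and-clipping mechanism mentioned in the statement is precisely what makes the hypothesis $\sup_\omega \|M(\omega)\| \le G_f$ enforceable during training, so the Lipschitz constant $G_f$ is a genuine architectural guarantee rather than a post-hoc assumption.
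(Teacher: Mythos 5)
Your proof is correct and follows essentially the same route as the paper's: linearity of $\mathcal{T}_M$, Parseval/Plancherel to pass to the frequency domain and back, and the pointwise bound $\sup_\omega\|M(\omega)\|\le G_f$ on the Hadamard product. Your explicit care with the DFT normalization constants and with interpreting $\|M(\omega)\|$ as a per-frequency operator norm in the vector-valued case are details the paper glosses over, but they do not change the argument.
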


\begin{proof}
The proof demonstrates the Lipschitz continuity by leveraging Parseval's theorem, which states that the energy of a signal is conserved under the Fourier transform.
\begin{align}
\|\mathcal{T}_M(x_1) - \mathcal{T}_M(x_2)\|_2^2 &= \|\mathcal{F}^{-1}(M \odot (\mathcal{F}(x_1) - \mathcal{F}(x_2)))\|_2^2 \\
&= \|M \odot (\mathcal{F}(x_1) - \mathcal{F}(x_2))\|_2^2 \\
&\leq G_f^2 \,\|\mathcal{F}(x_1) - \mathcal{F}(x_2)\|_2^2 \\
&= G_f^2 \,\|x_1 - x_2\|_2^2.
\end{align}
Taking the square root of both sides completes the proof. The derivation proceeds as follows:
\begin{enumerate}
    \item The definition of the operator $\mathcal{T}_M$ is applied to the difference between two outputs.
    \item By Parseval's theorem, the $L_2$ norm of a signal is equal to the $L_2$ norm of its Fourier transform (up to a scaling constant that cancels out), so the $\mathcal{F}^{-1}$ can be removed.
    \item The norm of the element-wise product is less than or equal to the maximum of the mask's norm multiplied by the norm of the other term. Since $\|M(\omega)\| \leq G_f$ for all $\omega$, this inequality holds.
    \item Applying Parseval's theorem again, the norm of the Fourier transform of the difference is equal to the norm of the difference in the original time domain.
\end{enumerate}
\end{proof}

\subsection{Hybrid Modeling of Long-Range and Aperiodic Dynamics}
After decomposing multi-scale seasonality, the model must capture the remaining complex dynamics, which include long-range dependencies and aperiodic events. MAESTRO employs a principled hybrid of State-Space Models (Mamba) and attention to address this.

\textbf{Methodology:} The core of the design is a synergistic combination of Mamba blocks and a multi-scale Transformer encoder. The Mamba block is based on a continuous-time State-Space Model (SSM) defined by:
\begin{align}
    h'(t) &= A h(t) + B x(t), \quad y(t) = C h(t) + D x(t)
\end{align}
where $A, B, C, D$ are learnable matrices. This is discretized for computation. Concurrently, the multi-scale Transformer encoder uses self-attention to identify discrete, content-based relationships across time:
\begin{equation}
    \text{Attention}(Q, K, V) = \text{softmax}\left(\frac{QK^T}{\sqrt{d_k}}\right)V
\end{equation}
Mamba models the continuous evolution of the time series, while attention captures discrete, event-driven interactions.

\textbf{Theoretical Justification.} This hybrid approach is principled because it assigns distinct modeling roles based on the theoretical strengths of each component.
\begin{enumerate}
  \item State-Space Models for Continuous Dynamics. The Mamba block is built on a continuous-time State-Space Model (SSM), which is discretized for computation \cite{gu2023mamba}. Its strength lies in efficiently compressing long historical context into a fixed-size state, which evolves over time. The selective gating mechanism, which can be viewed as a time-varying EMA \cite{gu2023mamba}, allows the model to dynamically adjust its memory horizon. This makes Mamba exceptionally well-suited for modeling the smooth, slowly evolving residual trends and aperiodic dynamics that remain after the primary seasonal components have been removed by the decomposition and frequency-domain modules.

  \item Attention for Discrete Events. In contrast, the self-attention mechanism excels at capturing sparse, content-addressable relationships. It can identify and connect specific, high-impact historical events (e.g., a past outbreak with a similar signature) regardless of how far apart they are in time. This is crucial for modeling sharp, non-local interactions that a purely sequential model like an SSM might miss.
  
\end{enumerate}

\textbf{Synergy:} The combination of Mamba and attention creates a powerful and efficient factorization of the learning problem. Mamba provides a robust, computationally efficient backbone for modeling the global, continuous evolution of the system's state. Attention complements this by providing a high-resolution mechanism to focus on specific, critical events in the past. This division of labor-continuous context from Mamba, discrete events from attention-allows MAESTRO to model complex epidemiological dynamics more effectively than either component could alone, providing a principled foundation for its architecture.

\subsection{Expressiveness of State-Space and Decomposition}
The model's ability to handle long dependencies and complex structures is further enhanced by the properties of its core building blocks.

\begin{lemma}[State-Space Memory \cite{gu2023mamba}]
The discrete state transition matrix $A^{\mathrm{d}}=\exp(A\Delta_t)$ of the Mamba block, with $A$ parameterized to have negative real parts, has a spectral radius $\rho(A^{\mathrm{d}})$ that is close to but less than 1. This ensures that the state $h_t$ maintains a long-range memory of past inputs, with a forgetting rate controlled by the learnable step size $\Delta_t$. This is critical for capturing long-term trends and dependencies in influenza dynamics.
\end{lemma}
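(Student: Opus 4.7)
The plan is to establish the spectral bound directly via the spectral mapping theorem and then convert it into the stated memory claim by unrolling the linear recurrence. First, I would invoke the spectral mapping theorem: the eigenvalues of $\exp(A\Delta_t)$ are exactly $\{\exp(\lambda_i\Delta_t):\lambda_i\text{ an eigenvalue of }A\}$. Combined with the parameterization $A=-\operatorname{softplus}(\widehat{A})$ introduced in the Mamba subsection (typically diagonal or low-rank-plus-diagonal), each eigenvalue $\lambda_i$ satisfies $\operatorname{Re}(\lambda_i)<0$, while $\Delta_t=\operatorname{softplus}(w_\Delta^\top x_t+b_\Delta)>0$ by construction. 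Hence
\[
\rho(A^{\mathrm{d}})=\max_i\bigl|\exp(\lambda_i\Delta_t)\bigr|=\exp\!\bigl(-\Delta_t\,\min_i|\operatorname{Re}(\lambda_i)|\bigr)<1,
\]
which gives the strict inequality. The ``close to but less than $1$'' clause follows because this expression is continuous in $\Delta_t$ and tends to $1^-$ as $\Delta_t\to 0^+$ (or as the dominant real part approaches $0^-$), so the learnable $\Delta_t$ can place $\rho(A^{\mathrm{d}})$ arbitrarily near the stability boundary.

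Second, I would translate this spectral bound into the memory statement by unrolling the recurrence $h_t=A^{\mathrm{d}}h_{t-1}+B^{\mathrm{d}}v_t$ into the convolutional form $h_t=\sum_{k=0}^{t-1}(A^{\mathrm{d}})^k B^{\mathrm{d}}v_{t-k}$, so that the influence of an input from $k$ steps in the past is governed by $\|(A^{\mathrm{d}})^k\|$. Under the diagonal/normal parameterization standard in Mamba, $\|(A^{\mathrm{d}})^k\|=\rho(A^{\mathrm{d}})^k$ exactly, yielding geometric decay with effective memory horizon $\tau(\Delta_t)=-1/\log\rho(A^{\mathrm{d}})=1/\bigl(\Delta_t\,\min_i|\operatorname{Re}(\lambda_i)|\bigr)$. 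This exhibits $\Delta_t$ explicitly as the knob controlling the forgetting rate: a smaller $\Delta_t$ pushes $\rho$ toward $1$ and lengthens memory, whereas a larger $\Delta_t$ shortens it. Because $\Delta_t$ is input-dependent, the effective horizon is adapted per time step, which is the selective behavior highlighted in the cited source.

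The main obstacle is controlling $\|(A^{\mathrm{d}})^k\|$ when $A$ is non-normal, since Gelfand's formula only gives $\|(A^{\mathrm{d}})^k\|\le C(\epsilon)\bigl(\rho(A^{\mathrm{d}})+\epsilon\bigr)^k$ with a potentially large constant arising from transient growth. I would dispatch this in one of two ways: either by restricting to the diagonal (or diagonal-plus-low-rank) parameterization explicitly endorsed in the preceding subsection, in which case normality makes the bound an equality and removes the constant; or by invoking the Jordan-form estimate $\|(A^{\mathrm{d}})^k\|\le C\,k^{m-1}\rho(A^{\mathrm{d}})^k$ with $m$ the largest Jordan block size, and noting that the polynomial prefactor is eventually dominated by the geometric decay. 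In either case the qualitative conclusion—long-range memory with a horizon scaling as $1/\Delta_t$—is preserved, completing the sketch.
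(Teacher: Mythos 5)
The paper itself offers no proof of this lemma: it is stated as an imported fact with a citation to the Mamba paper, so there is no in-paper argument to compare yours against. Your proposal supplies the missing derivation, and the core of it is sound: the spectral mapping theorem gives $\rho(A^{\mathrm{d}})=\exp\bigl(\Delta_t\max_i\operatorname{Re}(\lambda_i)\bigr)<1$ once the parameterization $A=-\operatorname{softplus}(\widehat{A})$ forces $\operatorname{Re}(\lambda_i)<0$ and the softplus forces $\Delta_t>0$, and your reading of ``close to but less than $1$'' as a limiting statement in $\Delta_t\to 0^+$ is the only honest way to treat a clause that is not quantified in the lemma. Your handling of the non-normal case via the Jordan-form bound $\|(A^{\mathrm{d}})^k\|\le C\,k^{m-1}\rho(A^{\mathrm{d}})^k$ is also correct and more careful than anything in the paper.

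One wrinkle deserves attention. The recurrence defined in the paper's Mamba subsection is time-varying: $h_t=A_t^{\mathrm{d}}h_{t-1}+B_t^{\mathrm{d}}v_t$ with $A_t^{\mathrm{d}}=\exp(A\Delta_t)$ depending on the input through $\Delta_t$. Your unrolling $h_t=\sum_{k=0}^{t-1}(A^{\mathrm{d}})^k B^{\mathrm{d}}v_{t-k}$ silently assumes a single fixed transition matrix, which is not the system being analyzed. The fix is cheap under the diagonal parameterization you already invoke: since all the $A_j^{\mathrm{d}}=\exp(A\Delta_j)$ commute, the product telescopes to $\prod_{j=t-k+1}^{t}A_j^{\mathrm{d}}=\exp\bigl(A\sum_{j=t-k+1}^{t}\Delta_j\bigr)$, so the influence of the input from $k$ steps back decays as $\exp\bigl(-\min_i|\operatorname{Re}(\lambda_i)|\sum_j\Delta_j\bigr)$. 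This is actually a stronger statement than the one you wrote, because it makes explicit that the forgetting rate is controlled by the \emph{accumulated} step sizes rather than a single $\Delta_t$ --- which is precisely the selective-memory mechanism the lemma is gesturing at. In the general non-normal, time-varying case the product of non-commuting matrices does not reduce to a power, and neither Gelfand nor the Jordan bound applies directly; you would need a uniform bound such as $\sup_t\|A_t^{\mathrm{d}}\|\le 1$ (guaranteed here since $\|\exp(A\Delta)\|\le e^{\Delta\,\mu(A)}$ with the logarithmic norm $\mu(A)<0$ for $A$ with negative-definite symmetric part, or trivially for diagonal $A$). With that one repair the argument is complete.
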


\begin{lemma}[SSM Discretization Stability \cite{gu2022s4}]
\label{lem:ssm-stability}
A fundamental property of SSMs is that a continuous linear SSM with a stable system matrix $A$ (all eigenvalues have negative real parts) results in a stable discrete system. Specifically, for a discretization step size $\Delta > 0$, the discrete system matrix $A^d = \exp(A\Delta)$ has a spectral radius $\rho(A^d) < 1$, ensuring asymptotic stability.
\end{lemma}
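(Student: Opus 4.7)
The plan is to establish the claim via the spectral mapping theorem for the matrix exponential. The key observation is that the eigenvalues of $A^d = \exp(A\Delta)$ are precisely $\{e^{\lambda_i \Delta}\}_{i=1}^N$, where $\{\lambda_i\}$ denote the eigenvalues of $A$ counted with multiplicity. First I would make this spectral correspondence rigorous by invoking the Jordan canonical decomposition: writing $A = PJP^{-1}$ with $J$ in Jordan form gives $\exp(A\Delta) = P\exp(J\Delta)P^{-1}$, and $\exp(J\Delta)$ is block upper-triangular with diagonal entries $e^{\lambda_i \Delta}$. Since similarity preserves the spectrum, the eigenvalues of $A^d$ are exactly these exponentials.

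Second, I would invoke the stability hypothesis $\operatorname{Re}(\lambda_i) < 0$ to bound each eigenvalue modulus. Writing $\lambda_i = \alpha_i + \mathrm{i}\beta_i$, the identity
\begin{equation}
|e^{\lambda_i \Delta}| = e^{\alpha_i \Delta}\,|e^{\mathrm{i}\beta_i \Delta}| = e^{\alpha_i \Delta},
\end{equation}
combined with $\Delta > 0$ and $\alpha_i < 0$, yields $|e^{\lambda_i \Delta}| < 1$ for every $i$. Taking the maximum over $i$ produces $\rho(A^d) = \max_i e^{\alpha_i \Delta} < 1$, which is the quantitative part of the claim. Note that the strict inequality requires the eigenvalues to sit in the open left half-plane; the parameterization $A = -\operatorname{softplus}(\widehat{A})$ introduced earlier guarantees exactly this, so the hypothesis is actively enforced by the architecture.

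Third, for the asymptotic stability conclusion I would appeal to the standard linear-systems result that $\rho(M) < 1$ implies $\|M^k\| \to 0$ as $k \to \infty$. This can be justified by Gelfand's formula $\rho(M) = \lim_{k \to \infty} \|M^k\|^{1/k}$, or equivalently by showing that each Jordan block of $M$ with eigenvalue $|\mu| < 1$ decays geometrically (the polynomial prefactors from nilpotent parts are absorbed by any strictly smaller geometric rate). Consequently, the unforced discrete recursion $h_{k+1} = A^d h_k$ satisfies $h_k = (A^d)^k h_0 \to 0$ for every initial state, which is the definition of asymptotic stability.

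The only mild subtlety is whether $A$ is diagonalizable. In Mamba's diagonal parameterization the argument is essentially immediate, since $A$ is diagonal with strictly negative entries and the spectral mapping reduces to exponentiating scalars. For the more general low-rank-plus-diagonal parameterization mentioned in the Mamba section, the Jordan form step is the only technicality, and I expect no substantive obstacle beyond careful bookkeeping of the Jordan blocks—both the spectral mapping theorem and the Gelfand-formula consequence are classical, so the proof is short and mostly a matter of citing the right standard facts.
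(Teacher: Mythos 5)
Your proof is correct. The paper itself gives no proof of this lemma---it is stated as a classical fact and attributed to the S4 paper---so there is nothing to diverge from; your argument (spectral mapping via the Jordan form, $|e^{\lambda_i\Delta}| = e^{\operatorname{Re}(\lambda_i)\Delta} < 1$ for $\operatorname{Re}(\lambda_i)<0$ and $\Delta>0$, then Gelfand's formula to pass from $\rho(A^d)<1$ to $(A^d)^k h_0 \to 0$) is exactly the standard one and is complete. The only caveat worth flagging is in your side remark, not in the proof of the lemma: the elementwise parameterization $A=-\operatorname{softplus}(\widehat{A})$ guarantees the Hurwitz hypothesis only in the diagonal case, since an entrywise-negative non-diagonal matrix need not have all eigenvalues in the open left half-plane; for the low-rank-plus-diagonal variant the hypothesis must be enforced by other means, but the lemma itself, which assumes $A$ is Hurwitz, is unaffected.
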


Following the successful application of decomposition in long-term forecasting \cite{wu2021autoformer, zhou2022fedformer}, the learning task is also simplified by initially decomposing the input $x$ into a trend component $x^{(\mathrm{tr})} = \mathcal{D}_K(x)$ and a seasonal component $x^{(\mathrm{se})} = x - x^{(\mathrm{tr})}$. The trend, as a low-frequency signal, can be effectively modeled by simpler network components. This allows the high-capacity encoders to focus on the more complex, high-frequency seasonal variations. Such separation can be viewed as a form of curriculum learning, which can improve model stability and sample efficiency.

\begin{proposition}[Multi-Scale Convolution Equivalence]
\label{prop:multiscale-equiv}
Let $\{C_{k,\delta}\}$ be a collection of 1D convolution operators with kernel sizes $k \in \mathcal{K}$ and dilations $\delta \in \mathcal{D}$. The attention-weighted aggregation
\begin{equation}
U = \sum_{k,\delta} \alpha_{k,\delta} C_{k,\delta}(H)
\end{equation}
with $\sum_{k,\delta} \alpha_{k,\delta} = 1$ and $\alpha_{k,\delta} \geq 0$ is equivalent to a single convolution with an adaptive kernel that linearly combines the original kernels. The symbols are defined as:
\begin{itemize}
    \item $C_{k,\delta}$: A 1D convolution operator with kernel size $k$ and dilation $\delta$. Dilation controls the spacing between kernel elements, allowing the operator to capture patterns at different temporal scales.
    \item $\mathcal{K}$: The set of kernel sizes used in the multi-scale convolution, for capturing patterns from fine-grained to coarse-grained temporal dependencies.
    \item $\mathcal{D}$: The set of dilation rates, which exponentially expand the receptive field without increasing computational cost.
    \item $\alpha_{k,\delta}$: Attention weights for each convolution operator $(k,\delta)$, learned dynamically based on input content. These weights determine the relative importance of different temporal scales.
    \item $H$: Input feature tensor of shape $(B, T, d)$ where $B$ is batch size, $T$ is sequence length, and $d$ is feature dimension.
    \item $U$: Output feature tensor after multi-scale convolution aggregation, maintaining the same shape as $H$.
    \item $w_{k,\delta}$: The learnable convolution kernel weights for operator $C_{k,\delta}$.
\end{itemize}
\end{proposition}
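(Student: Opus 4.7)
The plan is to prove the equivalence by exploiting the fundamental linearity of the convolution operation with respect to its kernel, which allows the attention-weighted sum of outputs to be rewritten as a single convolution whose kernel is the same weighted sum of the component kernels. The only subtlety is that the constituent operators $C_{k,\delta}$ act with different kernel sizes $k$ and dilation rates $\delta$, and therefore live on different supports; to add their kernels coherently I first embed them all into a common enlarged kernel on a shared lattice, after which linearity becomes immediate.

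First I would fix a common receptive field of length $L^{\star} = (k_{\max}-1)\delta_{\max}+1$, where $k_{\max}=\max\mathcal{K}$ and $\delta_{\max}=\max\mathcal{D}$. For each operator $C_{k,\delta}$ with learnable kernel $w_{k,\delta}\in\mathbb{R}^{k}$, I would construct an embedded kernel $\tilde{w}_{k,\delta}\in\mathbb{R}^{L^{\star}}$ by placing the entries of $w_{k,\delta}$ at indices $\{0,\delta,2\delta,\dots,(k-1)\delta\}$ (appropriately anchored) and zero-filling all other positions. By construction $C_{k,\delta}(H) = \tilde{w}_{k,\delta}\ast H$, where $\ast$ is ordinary 1D (dense) convolution on the common lattice. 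This step is routine but is precisely where care is needed so that dilation and kernel-size heterogeneity are absorbed into a single representation.

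Next I would apply the linearity of convolution in the kernel argument. For any fixed input $H$ and any scalars $\alpha_{k,\delta}$,
\begin{equation}
U \;=\; \sum_{k,\delta}\alpha_{k,\delta}\,C_{k,\delta}(H) \;=\; \sum_{k,\delta}\alpha_{k,\delta}\bigl(\tilde{w}_{k,\delta}\ast H\bigr) \;=\; \Bigl(\sum_{k,\delta}\alpha_{k,\delta}\,\tilde{w}_{k,\delta}\Bigr)\ast H \;=\; w_{\mathrm{eff}}\ast H,
\end{equation}
with the effective adaptive kernel $w_{\mathrm{eff}} \;=\; \sum_{k,\delta}\alpha_{k,\delta}\,\tilde{w}_{k,\delta}$. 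Because $\alpha_{k,\delta}\ge 0$ and $\sum_{k,\delta}\alpha_{k,\delta}=1$, the kernel $w_{\mathrm{eff}}$ is a convex combination of $\{\tilde{w}_{k,\delta}\}$, which immediately yields $\|w_{\mathrm{eff}}\|_p\le\max_{k,\delta}\|\tilde{w}_{k,\delta}\|_p$ for any $\ell_p$ norm by convexity of $\|\cdot\|_p$; this gives a free bound on the operator norm and hence on the Lipschitz constant of the aggregated layer, which I would record as a corollary since it mirrors the stability flavor of Proposition~\ref{prop:spectral-stability}.

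The main obstacle I anticipate is not the algebra but a modeling subtlety: the attention weights $\alpha_{k,\delta}$ are themselves produced from $H$ by the query network $q$, so $w_{\mathrm{eff}} = w_{\mathrm{eff}}(H)$ depends on the input. The equivalence therefore is not with a static linear filter but with a data-conditional (hypernetwork-style) convolution; linearity in the kernel still holds pointwise in $H$, so the identity $U = w_{\mathrm{eff}}(H)\ast H$ is valid, but the combined map $H\mapsto U$ is nonlinear. I would state this explicitly to avoid the misreading that MAESTRO's multi-scale block collapses to a single fixed convolution, and conclude by noting that this dynamic-kernel interpretation is precisely what gives the module its expressive power while inheriting convex-combination stability of the underlying basis kernels.
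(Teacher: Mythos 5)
Your argument is the same one the paper uses---linearity of convolution in the kernel argument, so that $\sum_{k,\delta}\alpha_{k,\delta}(w_{k,\delta}\ast H)=(\sum_{k,\delta}\alpha_{k,\delta}w_{k,\delta})\ast H$---but you supply two things the paper's proof omits. First, the embedding of each $w_{k,\delta}\in\mathbb{R}^k$ into a common support of length $L^\star=(k_{\max}-1)\delta_{\max}+1$ via dilation-aware zero-filling is actually \emph{necessary} for the statement to make sense: the paper writes $\sum_{k,\delta}\alpha_{k,\delta}w_{k,\delta}$ directly, which is ill-defined when the kernels have different sizes and strides on the index lattice, so your construction of $\tilde{w}_{k,\delta}$ repairs a genuine (if minor) gap rather than merely adding polish. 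Second, your caveat that $\alpha_{k,\delta}=\alpha_{k,\delta}(H)$ makes the effective kernel input-conditional---so the identity holds pointwise in $H$ but the overall map $H\mapsto U$ is nonlinear---is a correct and worthwhile clarification of what ``equivalent to a single convolution with an adaptive kernel'' means; the paper's wording invites the misreading that the block collapses to a fixed linear filter. The convex-combination norm bound $\|w_{\mathrm{eff}}\|_p\le\max_{k,\delta}\|\tilde{w}_{k,\delta}\|_p$ is a sound bonus consistent with the stability theme of Proposition~\ref{prop:spectral-stability}. One small point to be careful about if you formalize the embedding: the anchoring offsets must be chosen consistently with the padding convention of each $\mathrm{Conv1D}_{k,\delta}$ so that all embedded kernels are aligned to the same output index, otherwise the summed kernel introduces unintended shifts; with ``same''-padding centered kernels this is routine, but it is the one place the construction could silently go wrong.
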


\begin{proof}
By the linearity property of convolution operations, the weighted sum of convolutions can be rewritten as:
\begin{align}
U &= \sum_{k,\delta} \alpha_{k,\delta} C_{k,\delta}(H) \\
&= \sum_{k,\delta} \alpha_{k,\delta} (w_{k,\delta} \ast H) \\
&= \left(\sum_{k,\delta} \alpha_{k,\delta} w_{k,\delta}\right) \ast H
\end{align}
where $\ast$ denotes the convolution operation. The term $\sum_{k,\delta} \alpha_{k,\delta} w_{k,\delta}$ represents an adaptive kernel that is a linear combination of the original kernels, weighted by the attention coefficients.
\end{proof}

\subsection{Uncertainty Estimation}
With mean $\mu$ and standard deviation $\sigma=\operatorname{softplus}(g(x))+\varepsilon$, assuming conditional Gaussian likelihood, the per-step negative log-likelihood is
\begin{equation}
\mathcal{L}_{\mathrm{NLL}} = \tfrac{1}{2}\log(2\pi\sigma^2)+\tfrac{(y-\mu)^2}{2\sigma^2}.
\end{equation}
The model is trained with a composite objective $\mathcal{L} = \lambda_1\,\operatorname{MSE} + \lambda_2\,\operatorname{NLL} + \lambda_3\,R_{\text{spectral}} + \lambda_4\,R_{\text{weights}} + \lambda_5\,R_{\text{stability}}$, where $R_{\text{spectral}}$ includes magnitude and smoothness penalties on $M(\omega)$; $R_{\text{weights}}$ regularizes ensemble weights (entropy/smoothness); and $R_{\text{stability}}$ constraints SSM spectra.

\begin{lemma}[Uncertainty Calibration \cite{kuleshov2018accurate}]
\label{lem:uncertainty-calibration}
Under the Gaussian likelihood assumption, the predicted uncertainty $\sigma$ is well-calibrated if the standardized residuals $(y - \mu)/\sigma$ follow a standard normal distribution. This can be verified empirically through Q-Q plots and the Kolmogorov-Smirnov test.
\end{lemma}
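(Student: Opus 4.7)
The plan is to reduce the lemma to the probability integral transform (PIT) after fixing what ``well-calibrated'' means precisely, and then to justify the empirical diagnostics using classical goodness-of-fit theory. The mathematical content is largely definitional once the calibration notion is made explicit, so most of the work lies in pinning down the right notion and handling the forward/reverse directions carefully.

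First, I would adopt the probabilistic calibration criterion of Kuleshov et al.\cite{kuleshov2018accurate}: the Gaussian predictive $p(y\mid x)=\mathcal{N}(\mu(x),\sigma(x)^2)$, with CDF $F(y\mid x)=\Phi\!\big((y-\mu(x))/\sigma(x)\big)$ where $\Phi$ is the standard normal CDF, is calibrated if for every $p\in(0,1)$ the coverage $\mathbb{P}(y\le\mu(x)+\sigma(x)\Phi^{-1}(p))=p$, equivalently $F(y\mid x)\sim\operatorname{Uniform}(0,1)$ under the data-generating law. Second, I would establish the stated equivalence by PIT. Setting $z:=(y-\mu(x))/\sigma(x)$, the forward direction is immediate: if $z\sim\mathcal{N}(0,1)$ then $\Phi(z)\sim\operatorname{Uniform}(0,1)$, so the predictive is calibrated. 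The converse follows because $\Phi$ is a bijection on $\mathbb{R}$, so uniformity of $\Phi(z)$ forces $z=\Phi^{-1}(\Phi(z))\sim\mathcal{N}(0,1)$.

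Third, I would justify the empirical tests. Given held-out standardized residuals $\{\hat z_t\}_{t=1}^n$, the Q-Q plot against $\Phi^{-1}$ visualizes departures from $\mathcal{N}(0,1)$ by contrasting empirical order statistics with theoretical quantiles, while the Kolmogorov--Smirnov statistic $D_n=\sup_x\lvert \hat F_n(x)-\Phi(x)\rvert$ provides a distribution-free test whose null distribution is governed by the Kolmogorov limit and whose consistency under alternatives follows from the Glivenko--Cantelli theorem. A brief remark would note that using plug-in estimates $\hat\mu_t,\hat\sigma_t$ in place of $\mu_t,\sigma_t$ introduces a bias that vanishes as $n\to\infty$ provided these estimators are consistent on the held-out distribution.

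The main obstacle is conceptual rather than algebraic: the lemma asserts \emph{marginal} calibration, whereas forecasting practice often demands \emph{conditional} calibration in $x$. Empirical KS/Q-Q tests on pooled residuals cannot detect violations that cancel in aggregate---for instance, systematic over-confidence during outbreak peaks offset by under-confidence in inter-epidemic troughs. I would flag this gap explicitly in the write-up and recommend complementary stratified diagnostics (reliability curves conditioned on input regime, on lead time, or on the magnitude of $\hat\sigma_t$) as stronger checks of conditional calibration beyond what the lemma formally guarantees.
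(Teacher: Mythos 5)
The paper does not actually prove this lemma: it is stated as an imported fact, attributed to Kuleshov et al., with no argument supplied. Your proposal therefore cannot be compared against a paper-internal proof, but on its own terms it is correct and is the standard argument one would give. Fixing ``well-calibrated'' to mean uniformity of the probability integral transform $F(y\mid x)=\Phi\bigl((y-\mu(x))/\sigma(x)\bigr)$ is exactly the quantile-calibration notion of the cited reference, and the equivalence with standard normality of the standardized residuals follows immediately from $\Phi$ being a strictly increasing bijection of $\mathbb{R}$ onto $(0,1)$, as you say. Note that the lemma only asserts the sufficiency direction (``calibrated \emph{if} the residuals are standard normal''), so your converse is a bonus rather than a requirement. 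Your justification of the Q-Q and Kolmogorov--Smirnov diagnostics via Glivenko--Cantelli is also fine, with one caveat worth stating more firmly: because $\hat\mu_t,\hat\sigma_t$ are fitted, the classical Kolmogorov null distribution is not exactly valid (this is the Lilliefors issue), so the test is at best approximate even asymptotically unless one corrects for parameter estimation. Your closing observation that pooled PIT uniformity certifies only \emph{marginal} calibration and can mask compensating regime-dependent miscalibration is correct and is a genuine limitation of the lemma as stated; flagging it and recommending stratified reliability diagnostics strengthens the result beyond what the paper claims.
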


\section{Data Overview}
\label{sec:data-overview}
To provide a comprehensive overview of the datasets utilized in this study, a series of visualizations are presented that encapsulate their primary characteristics. These visualizations are crucial for understanding the underlying data distributions and temporal patterns that the proposed model, MAESTRO, aims to learn.

\begin{figure}[htbp]
    \centering
    \includegraphics[width=0.9\textwidth]{./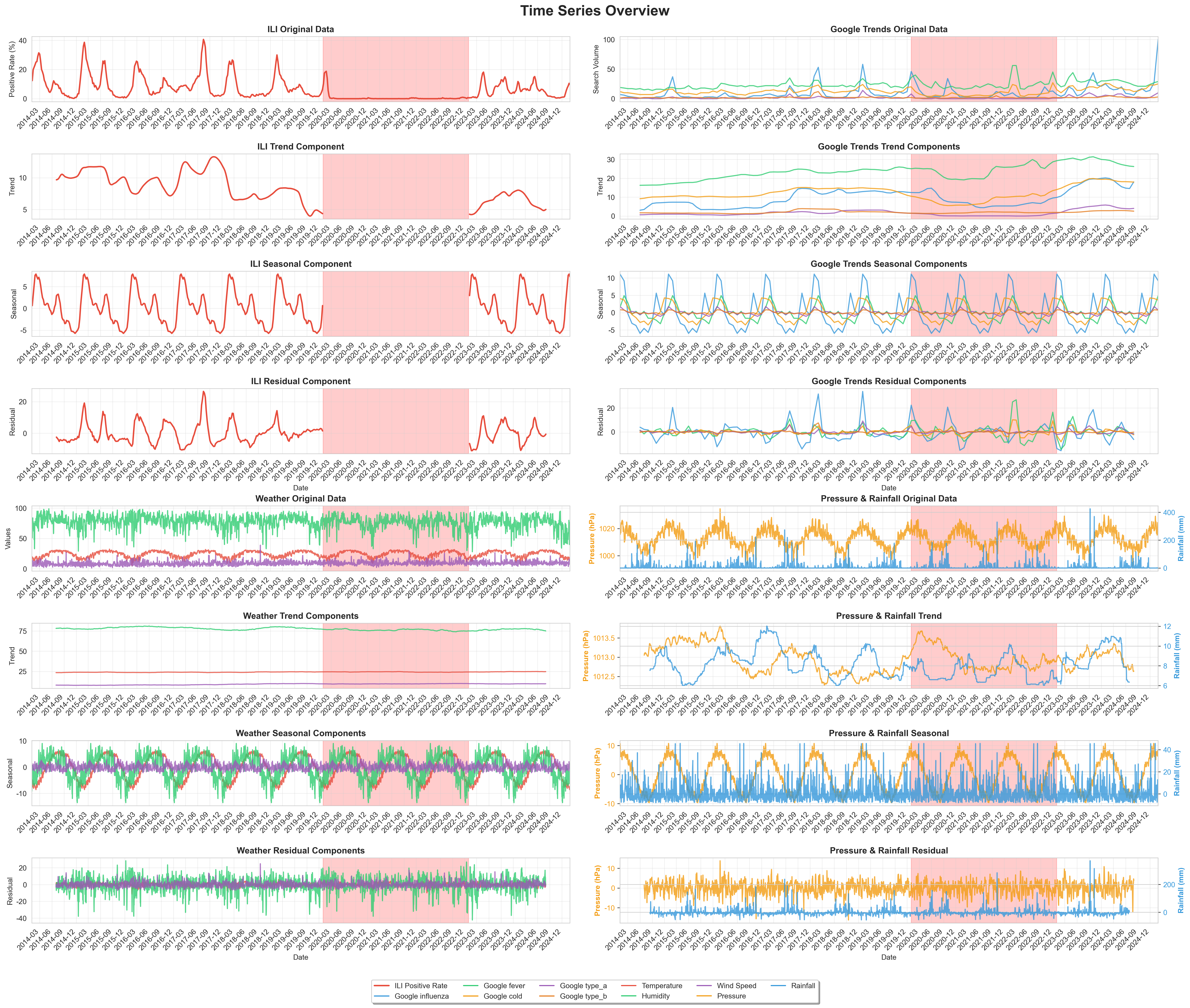}
    \caption{Time-series overview of influenza-like illness (ILI) cases, Google Trends data, and weather metrics. This figure illustrates the temporal dynamics and potential correlations between the different data modalities.}
    \label{fig:time-series-overview}
\end{figure}

\begin{figure}[htbp]
    \centering
    \includegraphics[width=0.9\textwidth]{./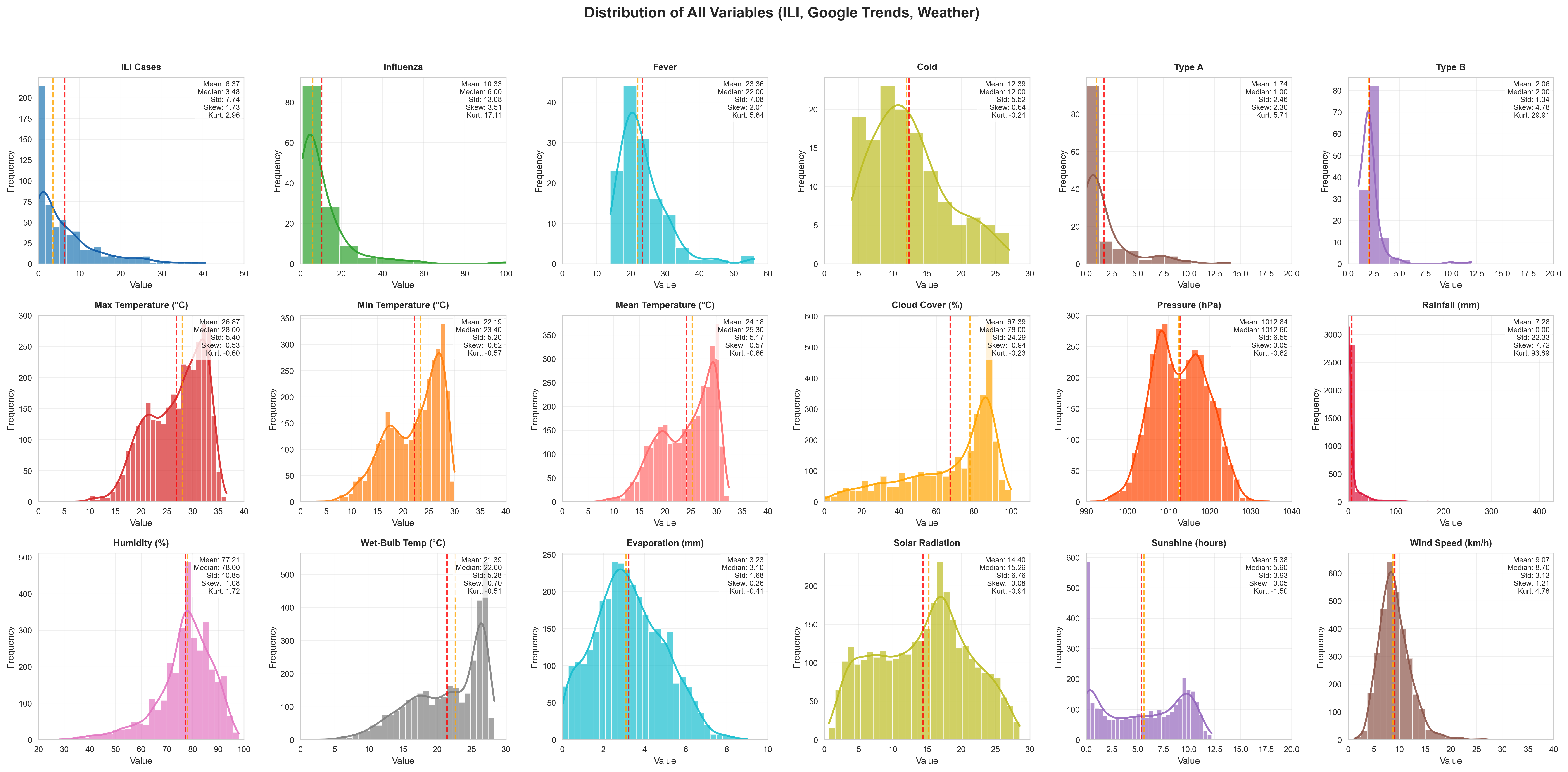}
    \caption{Comprehensive distribution analysis of all input variables. The figure displays histograms and Kernel Density Estimation (KDE) plots for Influenza-Like Illness (ILI) cases, Google Trends keywords, and meteorological data. Key statistical metrics, including mean, median, standard deviation, skewness, and kurtosis, are annotated for each subplot. The visualization reveals the distinct distributional characteristics of each data modality, such as the pronounced right-skew of ILI and Google Trends data, and the diverse patterns observed in weather variables.}
    \label{fig:combined-dist}
\end{figure}

The visualizations in Figure~\ref{fig:time-series-overview} and Figure~\ref{fig:combined-dist} offer a foundational understanding of the data's structure. Figure~\ref{fig:time-series-overview} provides a macro-level view of the temporal trends, clearly revealing seasonal and residual components that justify the model's decomposition-based architecture. Concurrently, Figure~\ref{fig:combined-dist} details the statistical distributions, highlighting the pronounced right-skew and high kurtosis in ILI and Google Trends data, which underscores the need for robust loss functions and uncertainty estimation. These insights are instrumental in appreciating the complexity of the forecasting task and the design of the model.

\section{Experimental Setup}
Chronological splits are adopted for the dataset, partitioning it into training (60\%), validation (20\%), and testing (20\%) sets. The mean performance over 5 independent training runs is reported, each with a different random seed, to ensure the stability of the results. For fairness, look-back $L$, horizon $H$, optimizer, epochs, and early stopping are aligned across models. Modalities are concatenated for single-stream baselines; models with native multi-branch fusion are also evaluated in that mode when feasible.
\paragraph{Implementation.} PyTorch with modular components (encoders, Mamba block, frequency module, cross-channel attention, temporal projection, uncertainty estimator). Default: $L=30$, $H=1$, Adam\cite{kingma2015adam} with learning rate 1e-3, batch size 256, early stopping, ReduceLROnPlateau.
\paragraph{Baselines and SOTA.} Classical: ARIMA; ML: SVR; RNN: LSTM, GRU; Transformers: Transformer, Informer, Autoformer, FEDformer, iTransformer; Spectro/Conv: TimesNet; Linear head: DLinear. Official or widely used implementations are used with consistent preprocessing and hyperparameter budgets.
\paragraph{Metrics.} MAE, RMSE, MAPE, and R\textsuperscript{2}.

\section{Results}

\subsection{Main Results}
\label{sec:main-results}

The primary results of the forecasting experiments are presented in Figure~\ref{fig:main-predictions}. This figure provides a comprehensive visualization of MAESTRO's predictive performance against the ground truth, alongside several state-of-the-art baseline models. The visualization highlights the model's ability to accurately capture the complex temporal patterns and fluctuations present in the influenza-like illness (ILI) data.

\begin{figure*}[htbp]
    \centering
    \includegraphics[width=\textwidth]{./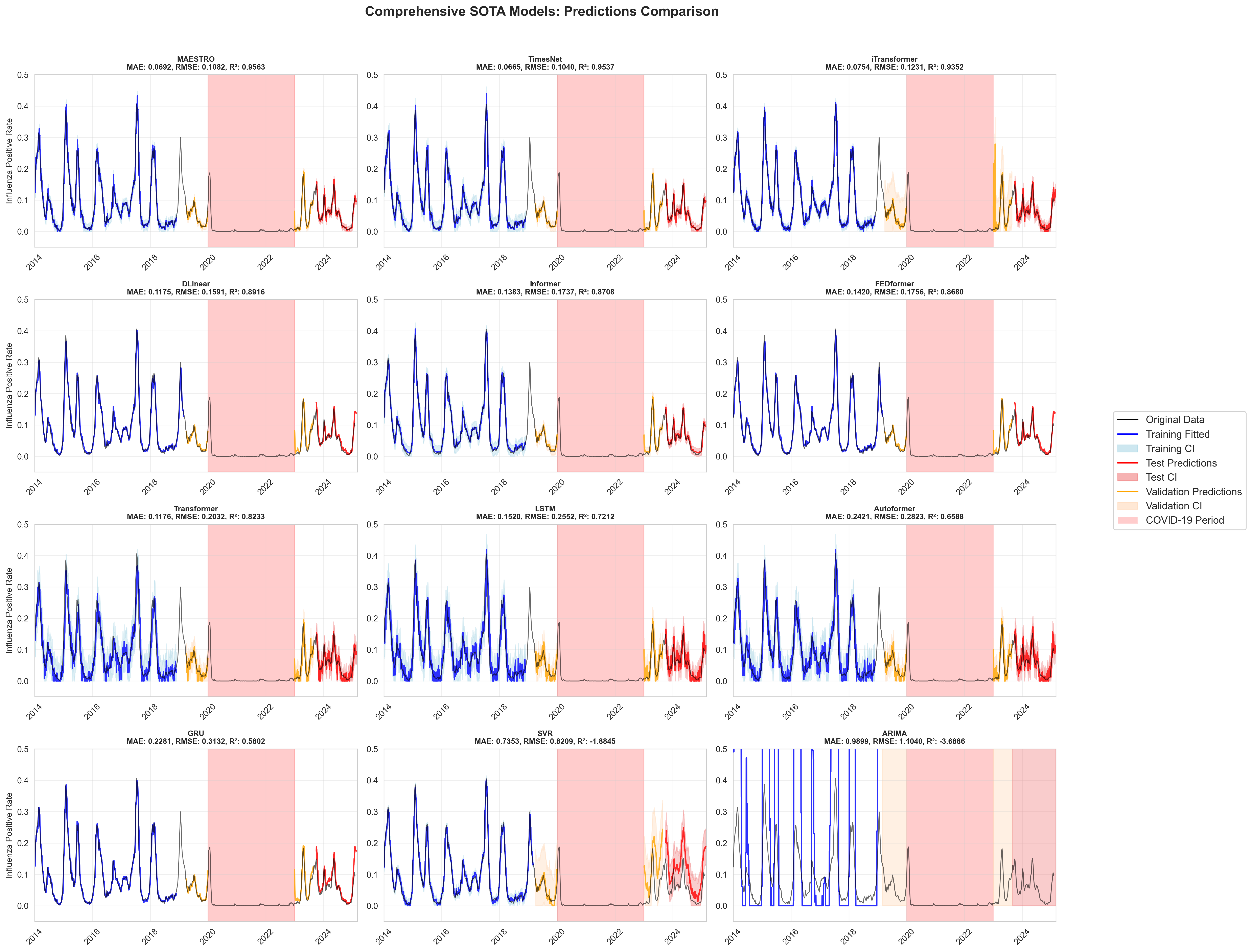}
    \caption{Comprehensive prediction results of MAESTRO and baseline models. This figure illustrates the model's forecasting accuracy over the test set, showcasing how it closely tracks the ground truth. See Table~\ref{tab:sota} for detailed performance metrics.}
    \label{fig:main-predictions}
\end{figure*}

As depicted in Figure~\ref{fig:main-predictions}, MAESTRO's predictions closely follow the actual ILI case counts, demonstrating its robustness and high fidelity. While some models like TimesNet show marginal advantages in specific error metrics such as MAE and RMSE, MAESTRO achieves the highest coefficient of determination (R\textsuperscript{2}), indicating a superior overall fit to the data's variance, as detailed in Table~\ref{tab:sota}. The shaded areas represent the confidence intervals, indicating the model's uncertainty estimates, which are crucial for reliable decision-making in public health scenarios.

\subsection{Performance Comparison}
\label{sec:performance-comparison}

To provide a clear and quantitative comparison of MAESTRO's performance against baseline models, the key performance metrics are presented in Table~\ref{tab:sota}, including Mean Absolute Error (MAE), Root Mean Squared Error (RMSE), Mean Absolute Percentage Error (MAPE), and the coefficient of determination (R\textsuperscript{2}).

\begin{table*}[!t]
  \centering
  \caption{Comparison with state-of-the-art models on Hong Kong influenza forecasting (higher is better for R\textsuperscript{2}, lower otherwise). Best and second-best are \textbf{bold} and \underline{underlined}.}
  \label{tab:sota}
  \begin{tabular}{lcccc}
    \toprule
    Method & MAE & RMSE & MAPE (\%) & R\textsuperscript{2} \\
    \midrule
    \textbf{MAESTRO (ours)} & \underline{0.069} & \underline{0.108} & \underline{74.08} & \textbf{0.956} \\
    TimesNet & \textbf{0.067} & \textbf{0.104} & 80.10 & \underline{0.954} \\
    iTransformer & 0.075 & 0.123 & \textbf{73.77} & 0.935 \\
    DLinear & 0.118 & 0.159 & 152.73 & 0.892 \\
    Informer & 0.138 & 0.174 & 81.47 & 0.871 \\
    FEDformer & 0.142 & 0.176 & 91.44 & 0.868 \\
    Transformer & 0.118 & 0.203 & 172.05 & 0.823 \\
    LSTM & 0.152 & 0.255 & 137.03 & 0.721 \\
    Autoformer & 0.242 & 0.282 & 139.13 & 0.659 \\
    GRU & 0.228 & 0.313 & 148.43 & 0.580 \\
    SVR & 0.735 & 0.821 & 503.86 & -1.884 \\
    ARIMA & 0.990 & 1.104 & 479.65 & -3.689 \\
    \bottomrule
  \end{tabular}
\end{table*}

\subsection{Ablation Studies}
\paragraph{Terminology and Variants.}
Consistent naming is followed between paper and code for ablations:
\begin{itemize}
    \item w/o Mamba: remove the state-space (Mamba) enhancement block while keeping the rest of the architecture intact.
    \item w/o Frequency Domain: disable the frequency-domain filtering module (FFT/filter/iFFT path).
    \item w/o Cross-Channel Attention: drop cross-channel attention for modality interaction.
    \item Modality ablations: single\_modal\_flu, single\_modal\_trend, single\_modal\_weather (use only one modality); modality\_flu\_trend, modality\_flu\_weather, modality\_trend\_weather (use two); full\_modal (use all).
    \item Architecture baselines: Transformer, GRU, LSTM, and Linear Regression baselines trained under the same protocol.
\end{itemize}

\begin{table*}[!t]
  \centering
  \caption{Ablation study results. MAESTRO (Full Model) is compared against variants with specific components removed or isolated. Performance metrics (MAE, RMSE, R\textsuperscript{2}) and parameter counts are reported. Lower is better for MAE/RMSE, higher for R\textsuperscript{2}.}
  \label{tab:ablation}
  \begin{tabular}{lrrrr}
    \toprule
    Model Variant & MAE & RMSE & R\textsuperscript{2} & Trainable Params (M) \\
    \midrule
    \textbf{MAESTRO (Full Model)} & \textbf{0.064} & \textbf{0.101} & \textbf{0.962} & \textbf{1.036} \\
    \midrule
    \textit{Component Ablations} & & & & \\
    w/o Mamba & 0.080 & 0.128 & 0.939 & 1.235 \\
    w/o Cross-Channel Attention & 0.065 & 0.114 & 0.952 & 0.995 \\
    w/o Frequency Domain & 0.075 & 0.120 & 0.946 & 1.055 \\
    w/o Decomposition & 0.075 & 0.121 & 0.946 & 1.303 \\
    w/o Multi-scale & 0.073 & 0.109 & 0.955 & 0.293 \\
    \midrule
    \textit{Minimalist Baselines} & & & & \\
    only Mamba & 0.101 & 0.153 & 0.912 & 0.373 \\
    only Cross-Attention & 0.101 & 0.153 & 0.912 & 0.342 \\
    only Frequency Domain & 0.086 & 0.138 & 0.929 & 0.274 \\
    Minimal Model & 0.120 & 0.182 & 0.876 & 0.274 \\
    \bottomrule
  \end{tabular}
\end{table*}

\begin{figure}[htbp]
  \centering
  \includegraphics[width=0.98\linewidth]{./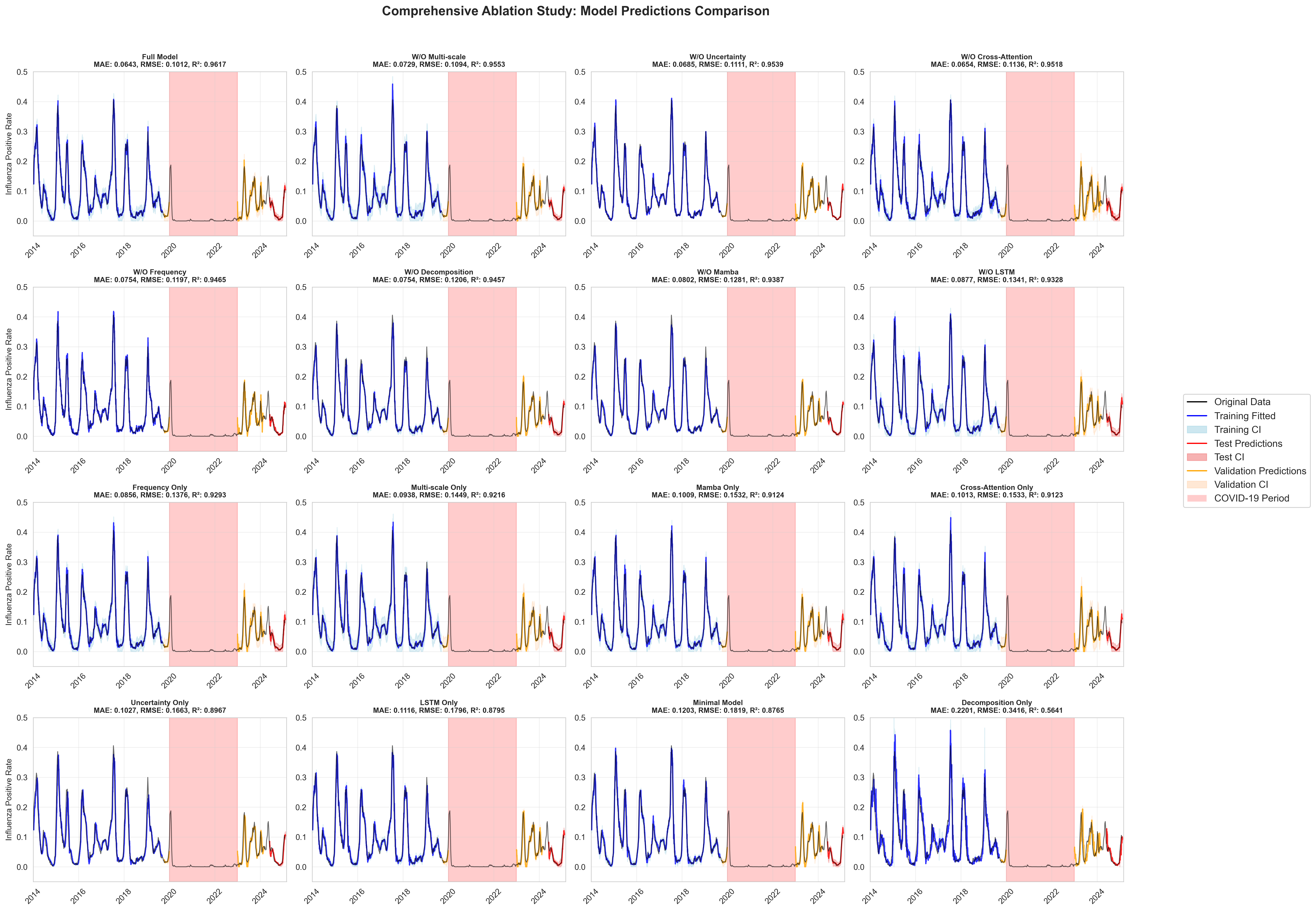}
  \caption{Comprehensive prediction comparison across ablations and baselines. The red dashed line separates training and testing periods; R\textsuperscript{2} is annotated in each panel.}
  \Description{Multi-panel figure showing original, fitted, and predicted influenza time series across MAESTRO variants and baselines, with a red dashed split line and per-panel R-squared labels.}
  \label{fig:comprehensive}
\end{figure}

\begin{figure}[htbp]
    \centering
    \includegraphics[width=0.95\textwidth]{./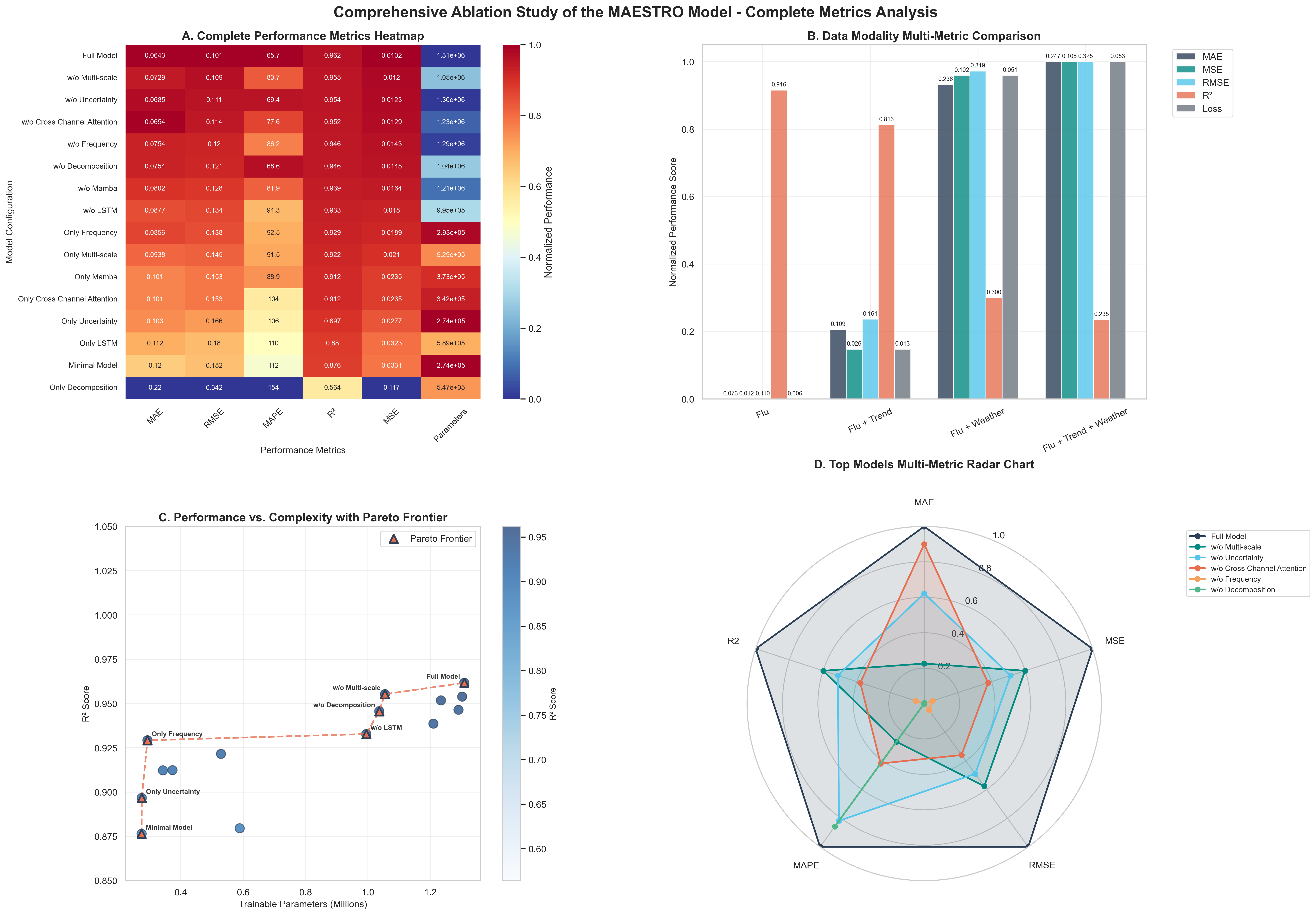}
    \caption{Comprehensive ablation study results: (A) Component impact analysis showing performance degradation when removing key components (w/o Mamba, w/o Cross-Channel Attention, etc.), (B) Data modality contribution analysis comparing different input combinations, (C) Performance-complexity trade-off analysis, and (D) Multi-metric radar chart comparing model variants across MAE, MSE, RMSE, and R\textsuperscript{2} metrics.}
    \label{fig:ablation-results}
\end{figure}

The comprehensive ablation study results presented in Figure~\ref{fig:ablation-results} provide detailed insights into MAESTRO's architecture design. Panel (A) demonstrates that removing the Mamba component causes the most significant performance degradation, followed by cross-channel attention and frequency domain components. Panel (B) shows that the full multi-modal approach (flu + trends + weather) achieves superior performance compared to single or dual modality combinations. The performance-complexity analysis in Panel (C), where complexity is measured by the number of trainable model parameters (in millions), reveals an optimal balance, while the radar chart in Panel (D) confirms MAESTRO's consistent superiority across multiple evaluation metrics.

\subsection{Error Analysis}
To further dissect the performance of MAESTRO against the baseline models, a detailed error analysis was conducted, as depicted in Figure \ref{fig:error_distribution}. The analysis includes a histogram of absolute errors, a box plot summarizing the error distributions, a cumulative distribution function (CDF) of the errors, and a statistical significance matrix comparing the models.

\begin{figure}[htbp]
  \centering
  \includegraphics[width=\textwidth]{./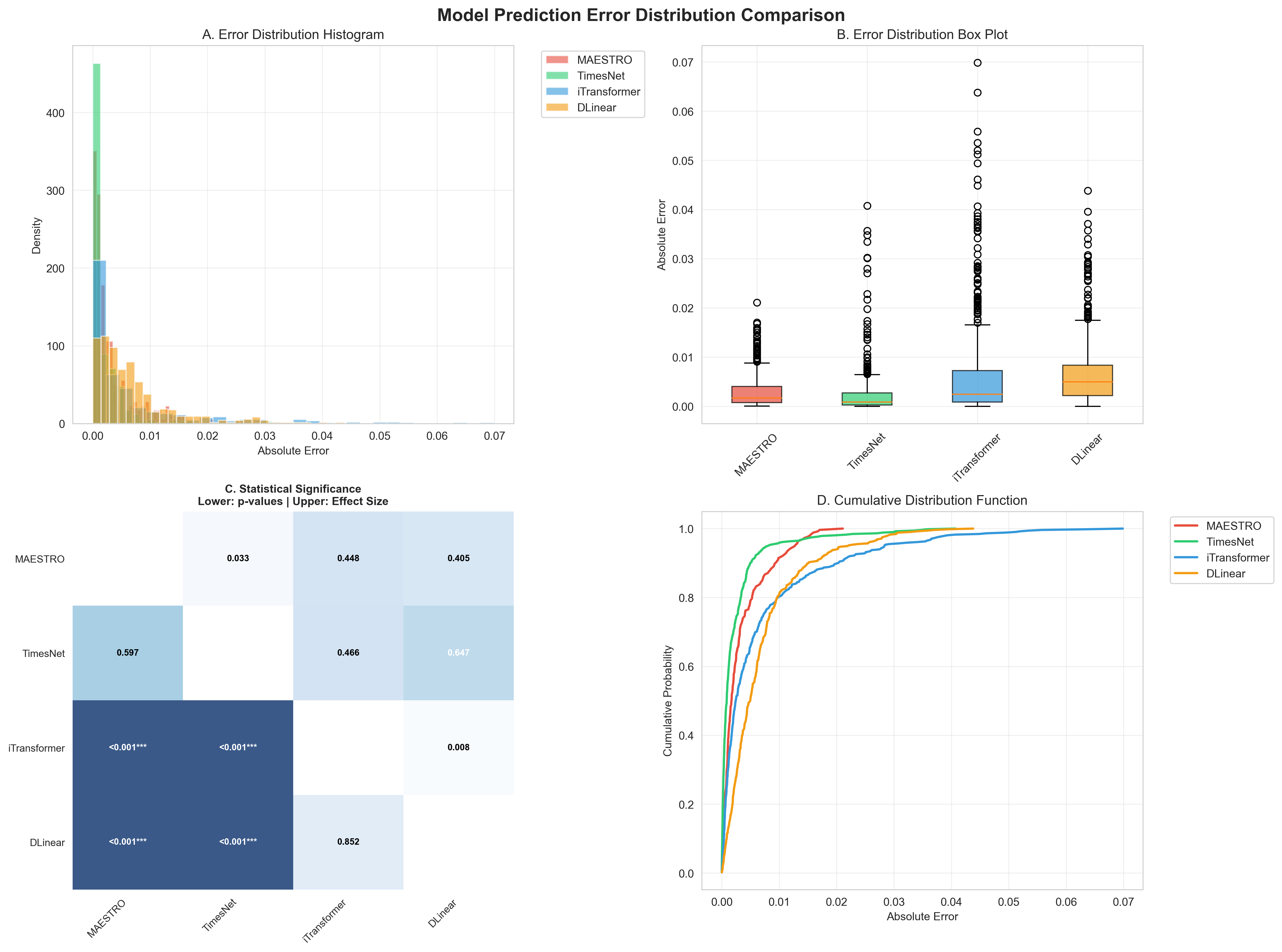}
  \caption{Model Prediction Error Distribution Comparison. This figure provides a comprehensive comparison of the absolute errors of MAESTRO and three baseline models (TimesNet, iTransformer, and DLinear). (A) A histogram of the error distributions. (B) A box plot for each model, highlighting the median, quartiles, and outliers. (C) The results of statistical significance tests (p-values and effect sizes between the models). (D) The cumulative distribution function (CDF) of the absolute errors, indicating the probability of the error being less than or equal to a certain value.}
  \label{fig:error_distribution}
\end{figure}

The error distribution histogram and the box plot reveal that MAESTRO and TimesNet generally have lower median errors and a tighter interquartile range compared to iTransformer and DLinear, suggesting more consistent and accurate predictions. The CDF plot further supports this, with MAESTRO's curve rising more steeply, indicating that a larger proportion of its predictions have very low errors.

The statistical significance matrix provides a quantitative comparison of the models. The lower triangle of the matrix shows the p-values from a Wilcoxon signed-rank test between pairs of models. The highly significant p-values ($<$0.001***) for the comparisons between MAESTRO and both iTransformer and DLinear confirm that MAESTRO's superior performance is statistically significant. While the difference between MAESTRO and TimesNet is not as statistically pronounced, the effect size (upper triangle) and the distributional plots suggest that MAESTRO still holds an edge in overall performance.

\section{Discussion and Limitations}
The results highlight the trade-offs inherent in multi-metric evaluation. While MAESTRO achieves a state-of-the-art R\textsuperscript{2} (0.956), indicating a superior overall fit to the data's variance, it is marginally outperformed by TimesNet on MAE and RMSE. This suggests that while the spectro-temporal fusion architecture excels at capturing the overall trend and seasonality, other models may be more specialized in minimizing point-wise absolute errors. The universally high MAPE values, as seen in Table~\ref{tab:sota}, are a known artifact of time series with near-zero values (common in off-seasons), which can disproportionately inflate the metric. For clarity, MAPE is reported calculated using the standard formula without any smoothing or adjustments for these near-zero values. It is thus contended that R\textsuperscript{2}, MAE, and RMSE are more reliable indicators for this forecasting task, with R\textsuperscript{2} underscoring the model's strength in explaining the underlying data structure.

The architectural gains of MAESTRO are primarily demonstrated empirically through extensive ablation studies (see Figure~\ref{fig:ablation-results}). A preliminary analysis of spectro-temporal factorization, efficient long-range modeling via SSM, and cross-modal fusion was provided. However, a formal theoretical analysis of why this specific combination of components is effective, particularly regarding its generalization performance under distribution shifts, remains a promising future research direction. The model's complexity, while justified by its performance, also warrants further investigation into more lightweight yet equally robust configurations.

This study is conducted on data from a single geography (Hong Kong), which limits the direct generalizability of the findings to other regions with different climates, healthcare systems, or population behaviors. Cross-region validation and domain adaptation are therefore critical next steps. Furthermore, while the COVID-19 period was intentionally excluded to ensure a stable training regime, the model's robustness to unforeseen structural breaks and regime shifts, such as pandemics, has not been fully evaluated and merits dedicated future study. This naturally leads to research on adaptive learning horizons and online model retraining strategies.

\section{Conclusion}
MAESTRO is introduced, a spectro-temporal multi-modal framework that fuses surveillance, search trends, and weather for influenza forecasting. The architecture couples decomposition, attention, state-space modeling, multi-scale convolutions, and frequency-domain learning. On Hong Kong data, it achieves a state-of-the-art R\textsuperscript{2} of 0.956 while demonstrating competitive performance on other metrics. Future work includes cross-region transfer, adaptive horizons, and theoretical analysis of spectro-temporal fusion.

\begin{acks}
The Hong Kong Department of Health is acknowledged for influenza surveillance, Google Trends for search data, and the Hong Kong Observatory for meteorological data. This work was supported in part by institutional funds from Macau University of Science and Technology. Colleagues are also thanked for helpful discussions.
\end{acks}

\section{Online Resources}
The source code for MAESTRO is publicly available in a GitHub repository: \url{https://github.com/Hylouis233/MAESTRO}.
%%
%% The next two lines define the bibliography style to be used, and
%% the bibliography file.
\bibliographystyle{ACM-Reference-Format}
\bibliography{references}

%%
%% If your work has an appendix, this is the place to put it.
\appendix

\end{document}